\documentclass[12pt]{article}

\usepackage{graphicx} 
\usepackage{float}    
\usepackage{verbatim} 
\usepackage{amsmath}  
\usepackage{amssymb} 
\usepackage{mathtools} 
\usepackage{subfig}   
\usepackage{times}
\usepackage[colorlinks=true,citecolor=blue,linkcolor=blue,urlcolor=blue]{hyperref}
\usepackage{fullpage}
\usepackage{amsthm}
\usepackage{enumerate}
\usepackage{paralist}
\usepackage{xspace}
\usepackage{caption}
\usepackage{bbm}
\usepackage{url}
\usepackage[square, numbers]{natbib}
\usepackage{mathrsfs}
\usepackage{algorithm,algcompatible}
\usepackage{thmtools}
\usepackage{thm-restate}
\usepackage{dsfont}

\newcounter{example}
\newenvironment{example}[1][]{\refstepcounter{example}\par\medskip
\noindent \textbf{Example~\theexample. #1} \rmfamily}{\medskip}

\newcounter{assumption}
\newenvironment{assumption}[1][]{\refstepcounter{assumption}\par\medskip
\noindent \textbf{Assumption~\theassumption. #1} \rmfamily}{\medskip}

\newcommand\xqed[1]{%
  \leavevmode\unskip\penalty9999 \hbox{}\nobreak\hfill
  \quad\hbox{#1}}
\newcommand\exend{\xqed{$\square$}}

\newtheorem{prop}{Proposition}
\newtheorem{lemma}{Lemma}

\newtheorem{thm}{Theorem}
\newtheorem*{thm*}{Theorem}

\DeclareMathOperator*{\sgn}{sgn}

\DeclareMathOperator*{\diag}{diag}
\DeclareMathOperator*{\tr}{\mathbf{tr}}

\newcommand{\Vcal}{\ensuremath{\mathcal{V}}}
\newcommand{\Sb}{\ensuremath{\mathbb{S}}}

\newcommand{\Dcal}{\ensuremath{\mathcal{D}}}
\newcommand{\Ocal}{\ensuremath{\mathcal{O}}}

\newcommand{\norm}[1]{\lVert #1 \rVert}

\newcommand{\EE}{\mathbb{E}}
\newcommand{\BB}{\mathbb{B}}

\newcommand{\PP}{\mathbb{P}}

\newcommand{\RR}{\mathbb{R}}

\newcommand{\Ahat}{\widehat{A}}
\newcommand{\Astar}{A_\star}

\newcommand{\vb}[1]{\mathbf{#1}} 

\newcommand{\indi}{\mathds{1}}

\newcommand{\ctcontrol}{c_1}
\newcommand{\ctinitial}{c_2}
\newcommand{\ctols}{c_3}
\newcommand{\ctbeta}{c_4}

\title{Active Learning for Nonlinear System Identification with Guarantees}
\author{Horia Mania \qquad Michael I. Jordan \qquad  Benjamin Recht \\
Department of Electrical Engineering and Computer Science \\
University of California, Berkeley}
\date{June 18, 2020}

\begin{document}
\maketitle

\begin{abstract}
While the identification of nonlinear dynamical systems is a fundamental building block of model-based reinforcement learning and feedback control, its sample complexity is only understood for systems that either have discrete states and actions or for systems that can be identified from data generated by i.i.d. random inputs. 
Nonetheless, many interesting dynamical systems have continuous states and actions and can only be identified through a judicious choice of inputs. 
Motivated by practical settings, we study a class of nonlinear dynamical systems whose state transitions depend linearly on a known feature embedding of state-action pairs. To estimate such systems in finite time identification methods must explore all directions in feature space. 
We propose an active learning approach that achieves this by repeating three steps: trajectory planning, trajectory tracking, and re-estimation of the system from all available data. 
We show that our method estimates nonlinear dynamical systems at a parametric rate, similar to the statistical rate of standard linear regression. 
\end{abstract}

\section{Introduction}

The estimation of nonlinear dynamical systems with continuous states and inputs is generally based on data collection procedures inspired by the study of optimal input design for linear dynamical systems \cite{schoukens2019nonlinear}. 
Unfortunately, these data collection methods are not guaranteed to enable the estimation of nonlinear systems. To resolve this issue, studies of system identification either assume the available data is informative enough for estimation  \cite{hong2008model, ljung1987system, schoukens2019nonlinear} or consider systems for which i.i.d. random inputs produce informative data \cite{bahmani2019convex, foster2020learning, oymak2018stochastic,sattar2020non}. 
However, as we will see, there are many nonlinear dynamical systems that cannot be estimated without a judicious choice of inputs.

Inspired by experimental design and active learning, we present a data collection scheme that is guaranteed to enable system identification in finite time. Our method applies to dynamical systems whose transitions depend linearly on a known feature embedding of state-input pairs. This class of models can capture many types of systems and is used widely in system identification \cite{hong2008model,ljung1987system}. For example, \citet{ng2006autonomous} used such a model to estimate the dynamics of a helicopter and \citet{brunton2016discovering} showed that sparse linear regression of polynomial and trigonometric feature embeddings can be used to fit models of the chaotic Lorentz system and of a fluid shedding behind an obstacle. These models can be parametrized as follows: 
\begin{align}
\label{eq:system}
\vb x_{t + 1} = A_\star \phi(\vb x_t, \vb u_t) + \vb w_t,
\end{align}
where $\vb x_t$ and $\vb u_t$ are the state and input of the system at time $t$, and $\vb w_t$ is stochastic noise. The feature map $\phi$ is assumed known and the goal is to estimate $\Astar$ from one trajectory by choosing a good sequence of inputs. The input $\vb u_t$ is allowed to depend on the history of states $\{\vb x_j\}_{j  = 0}^t$ and is independent of $\vb w_t$. 

The class of systems \eqref{eq:system} contains any linear system, with fully observed states, when the features include the states and inputs of the system. Moreover, any piecewise affine (PWA) system can be expressed using \eqref{eq:system} if the support of its pieces is known. First introduced by \citet{sontag1981nonlinear} as an approximation of nonlinear systems, PWA systems are a popular model of hybrid systems \cite{borrelli2017predictive,camacho2010model,heemels2001equivalence} and have been successfully used in a wide range of applications \cite{borrelli2006mpc,geyer2008hybrid,han2017feedback,marcucci2017approximate,sadraddini2019sampling,yordanov2011temporal}. 

While linear dynamical systems can be estimated from one trajectory produced by i.i.d. random inputs \cite{simchowitz2018learning}, the following example shows that this is not possible for PWA systems. 

\begin{example}
\label{ex:intro}
Let us consider the feature map $\phi : \RR^d \times \RR^d \to \RR^{3d}$ defined by: 
\begin{align*}
\phi(\vb x, \vb u) = \begin{bmatrix}
\vb x \cdot \indi\{\norm{\vb x} \leq \frac{3}{2}\} \\
\vb x \cdot \indi\{\norm{\vb x} > \frac{3}{2}\}\\
\vb u \cdot \indi\{\norm{\vb u} \leq 1\}\}
\end{bmatrix},
\end{align*}
where $\indi\{\cdot\}$ is the indicator function and the multiplication with $\vb x$ is coordinatewise. 
We assume there is no process noise and let $\Astar = \begin{bmatrix} \frac{1}{2} I_d & A_2 & I_d \end{bmatrix}$ for some $d \times d$ matrix $A_2$ and the $d \times d$ identity matrix $I_d$. Also, we assume $\vb x_0 = 0$.

Then, since the inputs to the system can have magnitude at most $1$, the state of the system can have magnitude larger than $3/2$ only if consecutive inputs point in the same direction. However, the probability that two or more random vectors, uniformly distributed on the unit sphere, point in the same direction is exponentially small in the dimension $d$. Therefore, if we used random inputs, we would have to wait for a long time in order to reach a state with magnitude larger than $3/2$.

On the other hand, if we chose a sequence of inputs $\vb u_t = \vb u$ for a fixed unit vector $\vb u$, we would be guaranteed to reach a state with norm larger than $3/2$ in a couple of steps. Hence, despite the input constraint, we would be able to reach the region $\norm{\vb x} > 3/2$ with a good choice of inputs. \exend
\end{example}

Therefore, the estimation of \eqref{eq:system} requires a judicious choice of inputs. To address this challenge we propose a method based on trajectory planning. At a high level, our method repeats the following three steps: 
\begin{itemize}
\item Given past observations and an estimate $\Ahat$, our method plans a reference trajectory from the current state of the system to a high uncertainty region of the feature space.

\item Then, our method attempts to track the reference trajectory using $\Ahat$. 

\item Finally, using all data collected so far, our method re-estimates $\Ahat$. 
\end{itemize}

The ability to find reference trajectories from a given state to a desired goal set is related to the notion of controllability, a standard notion in control theory. A system is called \emph{controllable} if it is possible to take the system from any state to any other state in a finite number of steps by using an appropriate sequence of inputs. In our case, a system is considered more controllable the bigger we can make the inner product between the system's features and goal directions in feature space. The number of time steps required to obtain a large inner product is called \emph{planning horizon}. 

The controllability of the system and the planning horizon are system dependent properties that influence our ability to estimate the system. Intuitively, the more controllable a system is, the easier it is to collect the data we need to estimate it. The following informal version of our main result clarifies this relationship.

\begin{thm*}[Informal]
Our method chooses actions $\vb u_t$ such that with high probability the ordinary least squares (OLS) estimate $\Ahat \in \arg \min_{A} \sum_{t = 0}^{T - 1} \norm{A \phi (\vb x_t, \vb u_t) - \vb x_{t + 1}}^2$ satisfies 
\begin{align*}
\norm{\Ahat - \Astar} \leq \frac{\text{size of the noise}}{\text{controllability of the system}} \sqrt{\frac{\text{dimension} \times \text{planning horizon}}{\text{number of data points}}}.
\end{align*}
\end{thm*}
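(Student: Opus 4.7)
The plan is to follow the standard OLS error decomposition and then isolate the two system-identification-specific ingredients: a self-normalized martingale bound for the noise, and a lower bound on the smallest eigenvalue of the empirical Gram matrix of features. Let $\Phi$ be the $d_\phi \times N$ matrix whose columns are $\phi(\vb x_t, \vb u_t)$ and $W$ the $n \times N$ matrix of noises $\vb w_t$. The normal equations yield $\Ahat - \Astar = W \Phi^\T (\Phi \Phi^\T)^{-1}$ (on the range of $\Phi$), which I would bound as
\begin{equation*}
\norm{\Ahat - \Astar} \leq \norm{(\Phi \Phi^\T)^{-1/2} \Phi W^\T} \cdot \lambda_{\min}(\Phi \Phi^\T)^{-1/2}.
\end{equation*}
The first factor is the self-normalized martingale quantity; since the inputs (and hence $\phi(\vb x_t, \vb u_t)$) are $\mathcal{F}_t$-measurable while $\vb w_t$ is conditionally mean-zero and sub-Gaussian, I would apply the Abbasi-Yadkori–Pál–Szepesvári-style bound to obtain $\norm{(\Phi \Phi^\T)^{-1/2} \Phi W^\T} \lesssim \sigma \sqrt{d_\phi \log(\det(\cdot)/\delta)}$ with high probability, which after mild boundedness arguments contributes the $\sqrt{\text{dimension}}$ factor and the "size of the noise" prefactor.

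The main work is the lower bound $\lambda_{\min}(\Phi \Phi^\T) \gtrsim \gamma^2 N/H$, where $\gamma$ quantifies controllability and $H$ is the planning horizon. I would proceed by epochs of length $H$: at the start of each epoch $k$, let $v_k$ be a unit eigenvector of the minimum eigenvalue of the current Gram matrix. Controllability guarantees the existence of a reference trajectory $(\vb x_t^{\mathrm{ref}}, \vb u_t^{\mathrm{ref}})$ of length at most $H$ from the current state such that $\ip{v_k}{\phi(\vb x_{t_k+H}^{\mathrm{ref}}, \vb u_{t_k+H}^{\mathrm{ref}})} \geq \gamma$. The planner synthesizes this using the current estimate $\Ahat$; tracking error propagates through Lipschitz continuity of $\phi$ and of the closed loop, so provided $\norm{\Ahat - \Astar}$ is already small enough the realized feature is within $O(\gamma/2)$ of the planned one, giving an $\Omega(\gamma^2)$ contribution to $v_k^\T (\Phi \Phi^\T) v_k$ per epoch. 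Summing over $N/H$ epochs and handling the variation of $v_k$ by a standard union/covering argument over eigendirections yields the claimed lower bound, which injects the $\sqrt{H/N}$ and the $1/\gamma$ prefactors.

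The main obstacle is the circularity between tracking and estimation: the quality of the planner, and hence of the per-epoch eigenvalue growth, depends on having $\norm{\Ahat - \Astar}$ already small, but that bound is precisely what we are trying to prove. I would resolve this with an inductive/bootstrap argument organized in doubling epochs: first a warm-up phase using conservative inputs establishes a crude bound $\norm{\Ahat - \Astar} \leq \varepsilon_0$; then for each subsequent epoch, one shows inductively that the error bound from the previous epoch is sufficient for the planner to synthesize accurate reference trajectories, which in turn drives $\lambda_{\min}(\Phi \Phi^\T)$ upward and refines the error via the decomposition above. Care must be taken to keep the failure probabilities summable across epochs and to track how tracking error depends on the current estimation error so that the induction does not blow up; this interaction between controllability, tracking, and the self-normalized concentration is where the proof's subtlety lies.
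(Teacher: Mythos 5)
Your skeleton does match the paper's architecture: the same normal-equation decomposition $\Ahat - \Astar = W^\top \Phi (\Phi^\top\Phi)^{-1}$, the same self-normalized martingale bound (the paper's Proposition~1, following Simchowitz et al.\ and Abbasi-Yadkori et al.), and the same plan of steering features toward the current minimal eigenvector of the Gram matrix (E-optimal-design flavor). But there is a genuine gap in the step ``summing over $N/H$ epochs and handling the variation of $v_k$ by a standard union/covering argument.'' Aligning each new feature with the \emph{current} minimal eigenvector only increases the Rayleigh quotient in that one direction, and the minimal eigendirection moves adaptively; a net-based pigeonhole does not transfer this into growth of $\lambda_{\min}$ at the claimed rate. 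Since $\norm{\Phi^\top\Phi}$ grows linearly in $t$, moving Rayleigh-quotient lower bounds between nearby directions requires net resolution of order $\alpha^2/(b_\phi^2 t)$, so the epoch count gets divided by a net of cardinality $(b_\phi^2 T/\alpha^2)^{\Theta(k)}$ --- exponentially worse in $k$ than what you need, destroying the $\sqrt{\text{dimension}}$ rate. The paper devotes a dedicated result to exactly this point (Proposition~3), proved by induction on partial sums of the bottom eigenvalues using the Bunch--Nielsen--Sorensen secular equation for rank-one updates, which yields $\lambda_{\min}\left(\sum_t \phi_t\phi_t^\top\right) \geq \frac{\alpha^2 K}{2k} - \frac{k-1}{2}b_\phi^2$; no standard covering argument substitutes for it.

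The second gap is your resolution of the circularity. The proposed warm-up ``using conservative inputs'' is precisely what the paper's Example~1 rules out: for piecewise systems, random or non-adaptive inputs can take time exponential in the dimension to produce informative data, which is why the paper simply \emph{assumes} an initial dataset (Assumption~6) rather than constructing one. More importantly, your induction requires the spectral-norm error to be small at scale $\alpha/(Lb_\phi(1+\gamma+\cdots+\gamma^{H-1}))$ so that realized features land within $\alpha/2$ of planned ones; since $\norm{\phi}$ can be as large as $b_\phi$, and $b_\phi$ can be arbitrarily large relative to $L$ and $\gamma$, this would demand $\lambda_{\min}(\Phi^\top\Phi) \gtrsim \mu^2 b_\phi^2 L^2(1+\gamma+\cdots+\gamma^{H-1})^2/\alpha^2$ --- a far more stringent warm start than Assumption~6, which is deliberately $b_\phi$-free. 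The paper's escape, which your proposal lacks, is to use the \emph{directional} bound $\norm{(\Ahat-\Astar)\phi} \leq \mu\sqrt{\phi^\top(\Phi^\top\Phi)^{-1}\phi}$ together with a dichotomy: if every feature encountered during planning and tracking has leverage below $\beta$, the per-step prediction errors are small and tracking provably delivers an $\alpha/4$-aligned measurement; if some feature has leverage at least $\beta$, tracking may fail, but the algorithm greedily collects that high-leverage point, which is informative by definition, and a determinant/ellipsoid argument (Proposition~2) caps the number of such events at $N_e$. This ``when tracking fails we still collect an informative data point'' mechanism --- flagged in the paper as the main insight of the analysis --- is what removes both the circularity and any need for uniform spectral-norm smallness, and it accounts for the $T/H - N_e$ effective sample size in the final bound.
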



This statistical rate is akin to that of standard supervised linear regression, but it has an additional dependence on the controllability of the system and the planning horizon. To better understand why these two terms appear, recall that our method uses $\Ahat$, an estimate of $\Astar$, to plan and track reference trajectories. Therefore, the tracking step is not guaranteed to reach the desired region of the feature space. The main insight of our analysis is that when trajectory tracking fails, we are still guaranteed to collect at least one informative data point per reference trajectory. Therefore, in the worst case, the effective size of the data collected by our method is equal to the total number of data points collected over the planning horizon. 




In the next section we present our mathematical assumptions and in Section~\ref{sec:main} we discuss our method and main result. Section~\ref{sec:ols} includes a general result about linear regression of dependent data derived from prior work. Then, in Section~\ref{sec:proof} we present in detail the proof of our main result. 
There is a long line of work studying system identification, which we discuss in Section~\ref{sec:related}. Finally, Section~\ref{sec:takeaways} contains takeaways and open problems. 

\textbf{Notation:} The norm $\norm{\cdot}$ is the Euclidean norm whenever it is applied to vectors and is the spectral norm whenever it is applied to matrices. We use $c_1$, $c_2$, $c_3$, \ldots to denote different universal constants. Also, $\Sb^{p - 1}$ is the unit sphere in $\RR^p$ and $\BB^p_r$ is the ball in $\RR^p$ centered at the origin and of radius $r$. 
The symbol $\square$ is used to indicate the end of an example or of a proof. 

\section{Assumptions}

To guarantee the estimation of \eqref{eq:system} we must make several assumptions about the true system we are trying to identify. We denote the dimensions of the states and inputs by $d$ and $p$ respectively. The feature map $\phi$ maps state-action pairs to feature vectors in $\RR^k$. 

The main challenge in the estimation of \eqref{eq:system} is choosing inputs $\vb u_t$ so that the minimal singular value of the design matrix is $\Omega(\sqrt{T})$, where $T$ is the length of the trajectory collected from the system. To reliably achieve this we must assume the feature map $\phi$ has some degree of smoothness. Without a smoothness assumption the noise term $\vb w_t$ at time $t$ might affect the feature vector $\phi(\vb x_{t + 1}, \vb u_{t + 1})$ at time $t + 1$ in arbitrary ways, regardless of the choice of input at time $t$. 

\begin{assumption}
\label{as:lip}
The map $\phi \colon \RR^d \times \BB^p_{r_u} \to \RR^k$ is $L$-Lipschitz\footnote{Since $\phi$ is continuous and since $\vb u$ lies in a compact set, we know that any continuous function of $\phi(\vb x, \vb u)$ achieves its maximum and minimum with respect to $\vb u$. This is the only reason we assume the inputs to the system are bounded. Alternatively, we could let the inputs be unbounded and work with approximate maximizers and minimizers.}. 
\end{assumption}

In order to use known techniques for the analysis of online linear least squares \cite{abbasi2011online, dani2008stochastic,rusmevichientong2010linearly, simchowitz2018learning} we also assume that the feature map $\phi$ is bounded. For some classes of systems (e.g. certain linear systems) this condition can be removed \cite{simchowitz2018learning}. 

\begin{assumption}
\label{as:bounded}
There exists $b_\phi > 0$ such that $\norm{\phi(\vb x, \vb u)} \leq b_\phi$ for all $\vb x \in \RR^d$ and $\vb u \in \BB_{r_u}$.
\end{assumption}

This assumption implies that the states of the system \eqref{eq:system} are bounded, a consequence which can be limiting in some applications. To address this issue we could work with the system 
\begin{align}
\label{eq:delta_system}
\vb x_{t + 1} = A_\star \phi(\vb x_t, \vb u_t) + \vb x_t + \vb w_t
\end{align}
instead. In this case, $\phi$ being bounded implies that the increments $\vb x_{t + 1} - \vb x_t$ are bounded, allowing the states to grow in magnitude. However, formulation \eqref{eq:delta_system} complicates the exposition so we choose to focus on \eqref{eq:system}.

As mentioned in the introduction, our method relies on trajectory planning and tracking to determine the inputs to the system. Suppose we would like to track a reference trajectory $\{(\vb x_t^R, \vb u_t^R)\}_{t \geq 0}$ that satisfies $\vb x_{t + 1}^R = \Astar \phi(\vb x_t^R, \vb u_t^R)$. In other words, we wish to choose inputs $\vb u_t$ to ensure that the tracking error $\| \vb x_t - \vb x_t^R\|$ is small. Simply choosing $\vb u_t = \vb u_t^R$ does not work even when the initial states $\vb x_0$ and $\vb x_0^R$ are equal because the true system \eqref{eq:system} experiences process noise.

To ensure that tracking is possible we assume that there always exists an input to the true system that can keep the tracking error small. 
There are multiple ways to formalize such an assumption. We make the following choice. 

\begin{assumption}
\label{as:control}
There exist positive constants $\gamma$ and $b_u$ such that for any $\vb x, \vb x^\prime \in \RR^d$ and any $\vb u^\prime \in \BB^p_{b_u}$ we have
\begin{align}
\label{eq:control_as}
\min_{\vb u \in \BB^p_{r_u}}\norm{ A_\star \left(\phi(\vb x, \vb u) - \phi(\vb x^\prime, \vb u^\prime) \right) }\leq \gamma \norm{ \vb x - \vb x^\prime}. 
\end{align}
Moreover, if $\norm{\vb u^\prime} \leq b_u / 2$, there exists $\vb u$, with $\norm{\vb u} \leq b_u$, that satisfies \eqref{eq:control_as}. 
\end{assumption}

Therefore, if we wish to track a reference trajectory $\{(\vb x_t^R, \vb u_t^R)\}_{t \geq 0}$ that satisfies $\vb x_{t + 1} = \Astar \phi(\vb x_t, \vb u_t)$, Assumption~\ref{as:control} guarantees the existence of an input $\vb u_t \in \BB^p_{b_u}$ such that 
\begin{align*}
\norm{\vb x_{t + 1} - \vb x_{t + 1}^R} &= \norm{\Astar \phi(\vb x_t, \vb u_t) + \vb w_t - A_\star \phi(\vb x_t^R, \vb u_t^R)} \\
&\leq \gamma \norm{\vb x_t - \vb x_t^R} + \norm{\vb w_t}. 
\end{align*}
In other words, Assumption~\ref{as:control} allows us to find an input $\vb u_t$ such that the tracking error $\norm{\vb x_{t+1} - \vb x_{t + 1}^R}$ is upper bounded in terms of noise $\vb w_t$ and the tracking error at time $t$. By induction, Assumption~\ref{as:control} guarantees the existence of inputs to the system such that 
\begin{align*}
\norm{\vb x_H - \vb x_H^R} \leq \max_{t = 0, \ldots, H - 1} \norm{\vb w_t} (1 + \gamma + \ldots + \gamma^{H - 1}) + \gamma^H \norm{\vb x_0 - \vb x_0^R}.
\end{align*}
Hence, when $\gamma < 1$ we can choose a sequence of inputs such that the state $\vb x_H$ at time $H$ is close to $\vb x_{H}^R$, as long as the process noise is well behaved.  

Note that in Assumption~\ref{as:control} we allow $\gamma \geq 1$. However, we pay a price when $\gamma$ is large. The larger $\gamma$ is the more stringent the next assumptions become. Finally, we note that the parameter $b_u$ appearing in Assumption~\ref{as:control} makes it easier for systems to satisfy the assumption than requiring that \eqref{eq:control_as} holds for all $\vb u^\prime$.

To estimate  \eqref{eq:system} we must collect measurements of state transitions from feature vectors that point in different directions. To ensure that such data can be collected from the system we must assume that there exist sequences of actions which take the dynamical system from a given state to some desired direction in feature space. This type of assumption is akin to the notion of controllability, which is standard in control theory. For example, a linear system $\vb x_{t + 1} = A \vb x_t + B \vb u_t$ is said to be controllable when the matrix $\begin{bmatrix} B & AB & \ldots & A^{d - 1} B \end{bmatrix}$ has full row rank. The interested reader can easily check that for a controllable linear system it is possible to get from any state to any other state in $d$ steps by appropriately choosing a sequence of inputs. 

This notion of controllability can be extended to a class of nonlinear systems, called control affine systems, through the use of Lie brackets \cite{sastry2013nonlinear,slotine1991applied}. We require a different notion of controllability. Namely, we assume that in the absence of process noise we can take the system \eqref{eq:system} from any state to a feature vector that aligns sufficiently with a desired direction in feature space. 

\begin{assumption}
\label{as:controllable}
There exist $\alpha$ and $H$, a positive real number and a positive integer, such that for any initial state $\vb x_0$ and goal vector $v \in \Sb^{k - 1}$ there exists a sequence of actions $\vb u_t$, with $\norm{\vb u_t} \leq b_u / 2$, such that $\left| \langle \phi(\vb x_t, \vb u_t), v \rangle \right| \geq \alpha > 0$ for some $0 \leq t \leq H$, with $\vb x_{j + 1} = A_\star \phi (\vb x_j, \vb u_j)$ for all $j$. 
\end{assumption}

If the assumption is satisfied for some horizon $H$, it is clear that it is also satisfied for larger horizons. Moreover, one expects that a larger horizon $H$ allows a larger controllability parameter $\alpha$. 
As discussed in the introduction, the larger $H$ is, the weaker our guarantee on estimation will be. However, the larger $\alpha$ is, the better our guarantee on estimation will be. Therefore, there is a tension between $\alpha$ and $H$ in our final result. 

Assumptions~\ref{as:lip} to \ref{as:controllable} impose many constraints. Therefore, it is important to give examples of nonlinear dynamical systems that satisfy these assumptions. We give two simple examples. First we present a synthetic example for which it is easy to check that it satisfies all the assumptions, and then we discuss the simple pendulum. 

\begin{example}[Smoothed Piecewise Linear System]
When the support sets of the different pieces are known, piecewise affine systems can be easily expressed as ~\eqref{eq:system}. However, the feature map $\phi$ would not be continuous. In this example, we present a smoothed version of a PWA system, which admits a $1$-Lipschitz feature map. Let $f : \RR \to \RR$ be defined by
\begin{align}
f(x) = \left\{ 
\begin{array}{ll}
0 & \text{if } x < -1/2,\\
x + 1/2 & \text{if } x \in [-1/2, 1/2],\\
1 & \text{if } x > 1/2.
\end{array}
\right.
\end{align}
We also consider the maps $g(\vb x) = \frac{\vb x_{t}}{\norm{\vb x_t}} \min \{\norm{\vb x_t}, b_x\}$ and $h(\vb u) = \frac{\vb u}{\norm{\vb u}} \min\{\norm{\vb u}, r_u\}$, for some values $b_x$ and $r_u$. In this example both the inputs and the states are $d$ dimensional. Then, we define the feature map $\phi : \RR^{2d} \to \RR^{3d}$ as follows
\begin{align}
\phi(\vb x, \vb u) = \begin{bmatrix}
g(\vb x) f( x_{1}) \\
g(\vb x)(1 - f(x_{1})) \\
h(\vb u)
\end{bmatrix},
\end{align}
where $x_{1}$ denotes the first coordinate of $\vb x$. Now, let us consider the following dynamical system:
\begin{align}
\label{eq:spls}
\vb x_{t + 1} = \begin{bmatrix}A_1 & A_2 & I_d \end{bmatrix}\phi(\vb x_t, \vb u_t) +   \vb w_t,
\end{align}
where $A_1$ and $A_2$ are two unknown $d \times d$ matrices. For the purpose of this example we can assume the noise $\vb w_t$ is zero almost surely. 

To better understand the system \eqref{eq:spls} note that when $\norm{\vb x_t} \leq b_x$ and $\norm{\vb u_t} \leq r_u$ we have
\begin{align*}
\vb x_{t + 1} &= A_1 \vb x_{t} + \vb u_t \quad \text{if }\; x_{t1} \geq 1/2, \\
\vb x_{t + 1} &= A_2 \vb x_{t} + \vb u_t \quad \text{if }\; x_{t1} \leq -1/2.
\end{align*}
By construction, the feature map of the system is $1$-Lipschitz and bounded. Therefore, \eqref{eq:spls} satisfies Assumptions \ref{as:lip}, and \ref{as:bounded}. We are left to show that we can choose $A_1$, $A_2$, $b_x$, and $r_u$ so that \eqref{eq:spls} satisfies Assumptions \ref{as:control} and \ref{as:controllable} as well. 

It is easy to convince oneself that if $b_x > 2\sqrt{2}$, Assumptions~\ref{as:control} and \ref{as:controllable} hold for any $A_1$ and $A_2$ as long as $r_u$ is sufficiently large relative to $A_1$, $A_2$, and $b_x$. In fact, if $r_u$ is sufficiently large, Assumption~\ref{as:control} is satisfied with $\gamma = 0$. \exend
\end{example}

\begin{example}[Simple Pendulum]
We know that the dynamics of a simple pendulum in continuous time are described by the equation 
\begin{align}
\label{eq:pendulum}
m\ell^2 \ddot{\theta}(t) + mg\ell \sin \theta(t)  = - b\dot{\theta}(t) + u(t),
\end{align}
where $\theta(t)$ is the angle of the pendulum at time $t$, $m$ is the mass of the pendulum, $\ell$ is its length, $b$ is a friction coefficient, and $g$ is the gravitational acceleration. 

Then, if we discretize \eqref{eq:pendulum} according to Euler's method~\footnote{Using a more refined discretization method, such as a Runge-Kutta method, would be more appropriate. Sadly, such discretization methods yield a discrete time system which cannot be easily put in the form \eqref{eq:delta_system}.} with step size $h$ and assume stochastic process noise, we obtain the two dimensional system:
\begin{align}
\vb x_{t + 1} = \vb x_t + \begin{bmatrix}
a_1 & a_2\\
h & 0
\end{bmatrix} \begin{bmatrix}
x_{t1} \\
\sin \left( x_{t2}\right)
\end{bmatrix} + \begin{bmatrix}
a_3 \\
0
\end{bmatrix} \vb u_t + \vb w_t, 
\end{align}
where $x_{t1}$ and $x_{t2}$ are the  coordinates of $\vb x_t$ and $a_1$, $a_2$, and $a_3$ are unknown real values. The first coordinate of $\vb x_t$ represents the angular velocity of the pendulum at time $t$, while the second coordinate represents the angle of the pendulum. Therefore, to put the inverted pendulum in the form of \eqref{eq:delta_system} we can consider the feature map 
\begin{align}
\phi(\vb x_t, \vb u_t) = \begin{bmatrix}
 x_{t1} \\
\sin \left( x_{t2}\right) \\
\vb u_t
\end{bmatrix}.
\end{align}
It can be easily checked that this feature map is $1$-Lipschitz. While it is not bounded, if pendulum experiences friction, we can ensure the feature values stay bounded by clipping the inputs $\vb u_t$, i.e. replace $\vb u_t$ with $\sgn(\vb u_t) \min \{|\vb u_t|, r_u\}$ for some value $r_u$. 

The simple pendulum satisfies Assumption~\ref{as:controllable} because we can drive the system in a finite number of steps from any state $\vb x_t$ to states $\vb x_{t + H}$ for which the signs of $ x_{(t + h)1}$
and $\sin \left( x_{t1}\right)$ can take any value in $\{-1, 1\}^2$, with their absolute values lower bounded away from zero. 

Finally, Assumption~\ref{as:control} holds with $\gamma \geq 1 + h$. This assumption is pessimistic because the simple pendulum is stabilizable and can track reference trajectories. However, Assumption~\ref{as:control} does not hold with $\gamma < 1$ since the input at time $t$ does not affect the position at time $t + 1$. \exend
\end{example}

Now we turn to our last two assumptions. We need to make an assumption about the process noise and we also must assume access to an initial $\Ahat$ to warm start our method. 

\begin{assumption}
\label{as:noise}
The random vectors $\vb w_t$ are independent, zero mean,  and $\norm{\vb w_t} \leq b_w$ a.s.\footnote{We can relax this assumption to only require $\vb w_t$ to be sub-Gaussian. In this case, we would make a truncation argument to obtain an upper bound on all $\vb w_t$ with high probability.}. Also, $\vb w_t$ is independent of $(\vb x_t, \vb u_t)$. Furthermore, we assume 
\begin{align}
\label{eq:noise_bound}
b_w \leq \frac{ \alpha}{\ctcontrol L (1 + \gamma + \ldots + \gamma^{H - 1})},
\end{align}%
for some universal constant $\ctcontrol > 2$.   
\end{assumption}

Equation~\ref{eq:noise_bound} imposes on upper bound on the size of the process noise in terms of system dependent quantities: the controllability parameter $\alpha$ introduced in Assumption~\ref{as:controllable}, the Lipschitz constant $L$ of the feature map, and the control parameter $\gamma$ introduced in Assumption~\ref{as:control}. An upper bound on $b_w$ is required because when the process noise is too large, it can be difficult to counteract its effects through feedback.

Finally, we assume access to an initial guess $\Ahat$ with $\norm{\Ahat -\Astar} = \Ocal \left(L^{-1}(1 + \gamma  +\ldots + \gamma^{H - 1})^{-1}\right)$. To understand the key issue this assumption resolves, suppose we are trying to track a reference trajectory $\{(\vb x_t^R, \vb u_t^R)\}_{t \geq 0}$ and $\norm{(\Ahat - \Astar) \phi(\vb x_t^R, \vb u_t^R)}$ is large.
 Without an assumption on the size of $\norm{\Ahat - \Astar}$, the magnitude of $(\Ahat - \Astar) \phi(\vb x_t^R, \vb u_t^R)$ might be large while $\norm{\phi(\vb x_t^R, \vb u_t^R)}$ is small. Then, making a measurement at a point $(\vb x_t, \vb u_t)$ close to $(\vb x_t^R, \vb u_t^R)$ might not be helpful for estimation because $\phi(\vb x_t, \vb u_t)$ could be zero. Therefore, if $\norm{\Ahat - \Astar}$ is too large, we might both fail to track a reference trajectory and to collect a useful measurement. For ease of exposition, instead of assuming access to an initial guess $\Ahat$, we assume access to a dataset. 

\begin{assumption}
\label{as:initial}
We have access to an initial trajectory $\Dcal = \{(\vb x_t, \vb u_t, \vb x_{t + 1})\}_{0 \leq t < t_0}$ of transitions from the true system such that 
\begin{align}
\lambda_{\min} \left(\sum_{t = 0}^{t_0 - 1} \phi(\vb x_t, \vb u_t) \phi(\vb x_t, \vb u_t)^\top \right) \geq 1 + c_2 b_w^2 L^2 \left(\sum_{i = 0}^{H - 1} \gamma^i\right)^2 \left(d + k \log(b_\phi^2 T) + \log\left(\frac{\pi^2 T^2 }{6\delta}\right)\right). 
\end{align}
where $c_2$ is a sufficiently large universal constant and $T$ is the number of samples to be collected by our method. We make explicit the requirement on $c_2$ in Section~\ref{sec:proof}. In Appendix~\ref{app:refine} we show how to replace $T$ by a fixed quantity $T_\star$. 
\end{assumption}

As shown in Section~\ref{sec:ols}, Assumption~\ref{as:initial} guarantees that the OLS estimate $\Ahat$ obtained from $\Dcal$ satisfies $\norm{\Ahat - \Astar} \leq \frac{c_3}{\sqrt{c_2}} L^{-1}(1 + \gamma  +\ldots + \gamma^{H - 1})^{-1}$ for some universal constant $c_3$. Since the features $\phi(\vb x, \vb u)$ can have magnitude as large as $b_\phi$, Assumption~\ref{as:initial} only implies $\norm{(\Ahat - \Astar) \phi(\vb x, \vb u)} = \Ocal(b_\phi L^{-1} (1 + \gamma  +\ldots + \gamma^{H - 1})^{-1})$. Therefore, Assumption~\ref{as:initial} does not imply a stringent upper bound on $\norm{(\Ahat - \Astar) \phi(\vb x, \vb u)}$ because $b_\phi$ can be arbitrarily large relative to $L$ and $\gamma$. 

\section{Main Result}
\label{sec:main}

Our method for estimating the parameters of a dynamical system \eqref{eq:system} is shown in Algorithm~\ref{alg:traj_opt_exploration}. The trajectory planning and tracking  routines are discussed in detail in Sections \ref{sec:plan} and \ref{sec:track} respectively. Our method is also presented in one block of pseudo-code in Appendix~\ref{app:alg}. Now, we can state our main result.

\begin{center}
  \begin{algorithm}[h!]
    \caption{Active learning for nonlinear system identification}{}
    \begin{algorithmic}[1]
      \REQUIRE{Parameters: the feature map $\phi$, initial trajectory $\Dcal$, and parameters $T$, $\alpha$, and $\beta$.}
      \STATE Initialize $\Phi$ to have rows $\phi(\vb x_j, \vb u_j)^\top$ and $Y$ to have rows $(\vb x_{j + 1})^\top$, for $(\vb x_j, \vb u_j, \vb x_{j + 1}) \in \Dcal$.  
      \STATE Set $\widehat{A}  \gets Y^\top \Phi (\Phi^\top \Phi)^{-1}$, i.e. the OLS estimate according to $\Dcal$.  
      \STATE Set $t \gets t_0$.
      \WHILE{$t \leq T + t_0$}
      \STATE Set $\vb x_0^R \gets \vb x_t$,
      \STATE Set $v$ to be a minimal eigenvector of $\Phi^\top \Phi$, with $\norm{v} = 1$.

      \STATE \textbf{Trajectory planning:} find inputs $\vb u_0^R$, $\vb u_1^R$, \ldots, $\vb u_r^R$, with $\norm{\vb u_j^R} \leq b_u$ and $r \leq H$, such that 
      \begin{align*}
&\left | \langle \phi(\vb x^R_r, \vb u_r^R) , v \rangle \right | \geq \frac{\alpha}{2} \text{ or }  \phi(\vb{x}^R_r, \vb u_r^R)^\top (\Phi^\top \Phi)^{-1} \phi(\vb{x}^R_r, \vb u_r^R) \geq \beta,
     \end{align*}
     where $\vb x_{j + 1}^R = \Ahat \phi(\vb x_j^R, \vb u_j^R)$ for all $j \in \{0, 1, \ldots, r - 1\}$.  
     \label{line:planning}

     \STATE \textbf{Trajectory tracking:} track the reference trajectory  $\{(\vb x_j^R, \vb u_j^R)\}_{j = 0}^r$ and increment $t$ as described in Section~\ref{sec:track}.

      \STATE  Set $\Phi^\top \gets [\phi_0, \phi_1, \ldots, \phi_{t -1}]$ and $Y^\top \gets [\vb x_1, \vb x_2, \ldots, \vb x_{t}]$, where $(\phi_j, \vb x_{j + 1})$ are all feature-state transitions observed so far. 
      \STATE \textbf{Re-estimate:} $\widehat{A} \gets Y^\top \Phi (\Phi^\top \Phi)^{-1}$.
      \ENDWHILE
    \STATE Output the last estimates $\widehat{A}$.
    \end{algorithmic}
    \label{alg:traj_opt_exploration}
    \end{algorithm}
\end{center}

\begin{thm}
\label{thm:main}
Suppose $\vb x_{t + 1} = \Astar \phi(\vb x_t, \vb u_t) + \vb w_t$ is a nonlinear dynamical system which satisfies Assumptions~\ref{as:lip}-\ref{as:noise} and suppose $\Dcal$ is an initial trajectory that satisfies Assumption~\ref{as:initial}. Also, let $\beta = c_4 \left(d + k \log(\beta_\phi^2 T) + \log(\pi^2 T^2 / (6\delta))\right)^{-1}$ with $c_4 \leq \frac{(c_1 - 2)^2}{36c_3^2}$ and\footnote{Recall $c_1$ is the universal constant appearing in Assumption~\ref{as:controllable} and $c_3$ is the universal constant appearing in the upper bound on the error of the OLS estimate, shown in Section~\ref{sec:ols}.} let
\begin{align}
N_e := \left\lceil \frac{2k \log\left(\frac{2k b_\phi^2}{\log(1 + \beta / 2)}\right)}{\log(1 + \beta / 2)} \right \rceil.
\end{align}
Then, with probability $1 - \delta$, Algorithm~\ref{alg:traj_opt_exploration} with parameters $T$ and $\beta$ outputs $\Ahat$ such that
\begin{align}
\norm{\Ahat - \Astar} \leq c_5 \frac{b_w}{\alpha} \sqrt{\frac{d + k \log(b_\phi^2 T) + \log\left(\frac{\pi^2 T}{6\delta}\right)}{T / H - N_e}}
\end{align}
whenever $T \geq \frac{32 k b_\phi^2 H}{\alpha^2} + HN_e$.  
\end{thm}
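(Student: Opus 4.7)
The plan is to combine three ingredients: a standard online least-squares concentration bound (the subject of Section~\ref{sec:ols}), an inductive invariant ensuring that $\norm{\Ahat - \Astar}$ stays small throughout the run of Algorithm~\ref{alg:traj_opt_exploration}, and a potential-function argument that counts "informative" outer iterations.

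First I would instantiate the OLS bound: whenever $\lambda_{\min}(\Phi^\top \Phi) \geq \lambda$, with probability $\geq 1-\delta$,
\begin{align*}
\norm{\Ahat - \Astar} \leq c_3 b_w \sqrt{\frac{d + k \log(b_\phi^2 T) + \log(\pi^2 T^2/(6\delta))}{\lambda}}.
\end{align*}
Combined with Assumption~\ref{as:initial}, this produces the base case of an invariant $\norm{\Ahat - \Astar} \leq (c_3/\sqrt{\ctinitial})\, L^{-1}(1 + \gamma + \ldots + \gamma^{H-1})^{-1}$ that is then preserved across re-estimations, since $\Phi^\top\Phi$ only grows over iterations. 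A union bound over the at most $T$ re-estimations supplies the $\log(\pi^2 T^2/(6\delta))$ factor appearing inside the final square root.

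Second, I would analyze a single outer iteration under this invariant. Trajectory planning succeeds in at most $H$ steps: Assumption~\ref{as:controllable} guarantees a reference input sequence (under the true dynamics) attaining $|\langle \phi, v\rangle| \geq \alpha$; because $\Ahat$ is close to $\Astar$ and $\phi$ is $L$-Lipschitz, a Gronwall-type induction along the rollout under $\Ahat$ shows that either the alignment threshold $\alpha/2$ or the uncertainty threshold $\beta$ is triggered. Then trajectory tracking (Section~\ref{sec:track}) combines Assumption~\ref{as:control} with the noise bound of Assumption~\ref{as:noise} to give $\norm{\vb x_j - \vb x_j^R} \leq b_w \sum_{i=0}^{j-1} \gamma^i$, and the Lipschitz property converts this into
\begin{align*}
\norm{\phi(\vb x_r, \vb u_r) - \phi(\vb x_r^R, \vb u_r^R)} \leq L b_w \sum_{i=0}^{H-1} \gamma^i \leq \frac{\alpha}{\ctcontrol},
\end{align*}
using Assumption~\ref{as:noise}. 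With $\ctcontrol > 2$, this slack lets the planning guarantees transfer to the realized trajectory: either $|\langle \phi(\vb x_r, \vb u_r), v\rangle| \geq \alpha/2 - \alpha/\ctcontrol \geq c\alpha$ for some absolute constant (the "alignment" case), or the quadratic form $\phi(\vb x_r, \vb u_r)^\top(\Phi^\top\Phi)^{-1}\phi(\vb x_r, \vb u_r)$ exceeds $\beta/2$ (the "uncertainty" case).

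Third, I would count iterations. Each alignment-case iteration contributes at least $(c\alpha)^2$ to $\lambda_{\min}(\Phi^\top\Phi)$ by a one-dimensional variance argument: appending a feature with inner product $\geq c\alpha$ against the current minimum eigenvector $v$ shifts $v^\top \Phi^\top\Phi\, v$ by this amount. Each uncertainty-case iteration increases $\log\det(\Phi^\top\Phi + I)$ by at least $\log(1+\beta/2)$; since $\log\det$ is bounded above by $k \log(k b_\phi^2 T / \log(1+\beta/2))$, the standard elliptical-potential-lemma calculation limits such iterations to at most $N_e$. Out of the $\lceil T/H \rceil$ outer iterations, at most $N_e$ are uncertainty-type, so at least $T/H - N_e$ are alignment-type, yielding $\lambda_{\min}(\Phi^\top\Phi) \gtrsim \alpha^2 (T/H - N_e)$. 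Substituting into the OLS bound of step one closes the argument.

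The main obstacle is the transfer of the planning conditions from the nominal $(\vb x_j^R, \vb u_j^R)$ to the realized $(\vb x_j, \vb u_j)$: I must simultaneously control (i) the state drift from reference accumulated under tracking, (ii) the resulting feature drift via Lipschitz continuity, and (iii) the effect of this drift on both the linear functional $\langle \cdot, v\rangle$ and the quadratic form $\phi^\top(\Phi^\top\Phi)^{-1}\phi$. The constants $\ctcontrol > 2$ in Assumption~\ref{as:noise}, the factor of $\alpha/2$ (rather than $\alpha$) in the planning condition, and the tunable $\ctbeta$ in the choice of $\beta$ are precisely what provides the slack for this transfer and for the absorption of constants in the final bound. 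A secondary subtlety is ensuring that the $\Ahat$-invariant, the planning success, and the tracking success all hold simultaneously for every outer iteration; this is handled by a single union bound whose cost is the $\log(\pi^2 T^2/(6\delta))$ term.
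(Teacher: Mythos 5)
Your architecture matches the paper's (OLS confidence bound, planning-feasibility dichotomy between alignment and uncertainty, elliptical potential to cap uncertainty iterations at $N_e$), but the final counting step contains a genuine error. The claim that each alignment-case iteration ``contributes at least $(c\alpha)^2$ to $\lambda_{\min}(\Phi^\top\Phi)$ by a one-dimensional variance argument'' is false: appending $\phi\phi^\top$ with $|\langle \phi, v\rangle| \geq c\alpha$ increases the Rayleigh quotient $v^\top \Phi^\top\Phi\, v$ in the \emph{fixed} direction $v$, but the minimum eigenvalue is an infimum over all directions and its eigenvector moves. Concretely, take $M = \diag(1,1)$ and $v = e_1$; adding $\alpha^2 e_1 e_1^\top$ leaves $\lambda_{\min}$ unchanged, and in general interlacing only gives $\lambda_{\min}(M + \phi\phi^\top) \leq \lambda_{k-1}(M)$, so a single aligned update can contribute nothing at all. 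The paper's own remark after Proposition~\ref{prop:rank-one} (cycling over scaled standard basis vectors) shows the best achievable growth is $\alpha^2/k$ per aligned step, not $\alpha^2$. Closing exactly this hole is the point of Proposition~\ref{prop:rank-one}, which is nontrivial: it tracks the \emph{sum} of the bottom eigenvalues via the rank-one secular-equation analysis of Bunch--Nielsen--Sorensen (Lemma~\ref{lem:rank-one}) and an induction over tail sums, yielding $\lambda_{\min}\left(\sum_t \phi_t\phi_t^\top\right) \geq \frac{\alpha^2}{2k}K - \frac{k-1}{2}\left(b_\phi^2 - \frac{\alpha^2}{2}\right)$ when $K$ aligned measurements are collected. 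The $1/k$ factor and the additive loss $\frac{k-1}{2}b_\phi^2$ are precisely what the burn-in hypothesis $T \geq \frac{32 k b_\phi^2 H}{\alpha^2} + HN_e$ of Theorem~\ref{thm:main} absorbs; your proposal has no counterpart to either, so the substitution $\lambda_{\min}(\Phi^\top\Phi) \gtrsim \alpha^2(T/H - N_e)$ into the OLS bound does not go through as written.

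A secondary, repairable imprecision: your tracking bound $\norm{\vb x_j - \vb x_j^R} \leq b_w \sum_{i<j}\gamma^i$ would hold if tracking used $\Astar$, but the tracker minimizes $\norm{\Ahat(\phi(\vb x_j,\vb u) - \phi_j^R)}$ under the \emph{estimated} model. The correct recursion is $\norm{\vb x_{j+1} - \vb x_{j+1}^R} \leq \gamma \norm{\vb x_j - \vb x_j^R} + 3\mu\sqrt{\beta} + b_w$, where the extra $3\mu\sqrt{\beta}$ is the model error evaluated at the comparison input $\vb u_\star$ supplied by Assumption~\ref{as:control} and at $\phi_j^R$; bounding it requires that every feature encountered before the stopping index has quadratic form below $\beta$ (otherwise the greedy branch fires and the iteration is charged to the uncertainty count via the Cauchy--Schwarz argument on $\Delta = \phi_s^R - \phi_s$, needing the eigenvalue floor of Assumption~\ref{as:initial} with $c_2$ large enough that $\Delta^\top(\Phi^\top\Phi)^{-1}\Delta \leq \frac{\beta}{2}(3 - 2\sqrt{2})$). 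You gesture at this slack through $\ctbeta$ but never derive the mixed recursion; the same caveat applies to planning, where the rollout under $\Ahat$ cannot simply replay the nominal input sequence from Assumption~\ref{as:controllable} but must re-select $\vb u_j^R$ at each step via Assumption~\ref{as:control}. These steps follow the paper's Parts 1--2 once spelled out; the missing Proposition~\ref{prop:rank-one} in Part 3 is the substantive gap.
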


There are several aspects of this result worth pointing out. First of all, the statistical rate we obtained in Theorem~\ref{thm:main} has the same form as the standard statistical rate for linear regression, which is $\Ocal\left(b_w \sqrt{\frac{k}{T}} \right)$. 
The two important distinctions are the dependence on the planning horizon $H$ and the controllability term $\alpha$, both of which are to be expected in our case. 
Algorithm~\ref{alg:traj_opt_exploration} uses trajectory planning for data collection and the length of the reference trajectories is at most $H$. Since we can only guarantee one useful measurement per reference trajectory, it is to be expected that we can only guarantee an effective sample size of $T/H$.
The controllability term $\alpha$ is also natural in our result because it quantifies how large the feature vectors can become in different directions. 
Larger feature vectors imply a larger signal-to-noise ratio, which in turn implies faster estimation. 

\subsection{Trajectory planning}
\label{sec:plan}

The trajectory planning routine shown in Algorithm~\ref{alg:traj_opt_exploration} uses the current estimate $\Ahat$ to plan, assuming no process noise, a trajectory from the current state of the system $\vb x_0^R = \vb x_t$ to a high uncertainty region of the feature space, assuming no process noise. More precisely, it finds a sequence of actions $\{\vb u_j^R\}_{j = 0}^r$ which produces a sequence of reference states $\{\vb x_j^R\}_{j = 0}^r$ with the following properties:
\begin{itemize}
\item $\vb x_{j + 1}^R = \Ahat \phi(\vb x_j^R, \vb u_j^R)$,
\item The last reference state-action pair $(\vb x_r^R, \vb u_r^R)$ is either well aligned with $v$, the minimum eigenvector of $\Phi^\top \Phi$, or its feature vector is in a high uncertainty region of the state space. More precisely, $(\vb x_r^R, \vb u_r^R)$ must satisfy one of the following two inequalities: 
\begin{align*}
\left | \langle \phi(\vb x^R_r, \vb u_r^R) , v \rangle \right | \geq \frac{\alpha}{2} \text{ or }  \phi(\vb{x}^R_r, \vb u_r^R)^\top (\Phi^\top \Phi)^{-1} \phi(\vb{x}^R_r, \vb u_r^R) \geq \beta.
\end{align*}
\end{itemize}
It is not immediately obvious that we can always  find such a sequence of inputs. In Section~\ref{sec:proof} we prove that when Assumptions~\ref{as:controllable} and \ref{as:initial} hold the trajectory planning problem is feasible. 

From the study of OLS, discussed in Section~\ref{sec:ols}, we know that the matrix $\Phi^\top \Phi$ determines the uncertainty set of OLS. The larger $\lambda_{\min} \left(\Phi^\top \Phi\right)$ is, the smaller the uncertainty set will be. 
Therefore, to reduce the size of the uncertainty set we want to collect measurements at feature vectors $\phi$ such that the smallest eigenvalues of $\Phi^\top \Phi + \phi \phi^\top$ are larger than the smallest eigenvalues of $\Phi^\top \Phi$. Ideally, $\phi$ is a minimal eigenvector of $\Phi^\top \Phi$. However, we cannot always drive the system to such a feature vector, especially in the presence of process noise. 

Instead, we settle for feature vectors of the following two types. Firstly, trajectory planner tries to drive the system to feature vectors $\phi$ that are well aligned with the minimal eigenvector $v$ of $\Phi^\top \Phi$, i.e. $|\langle \phi, v \rangle| \geq \alpha$. Such a data collection scheme is an instance of E-optimal design \cite{pukelsheim1993optimal}, which has been shown by \citet{wagenmaker2020active} to produce inputs that allow the estimation of linear dynamics at an optimal rate. 

However, if reaching a feature vector that aligns with the minimal eigenvector is not possible, the trajectory planner finds a reference trajectory to a feature vector $\phi$ such that $\phi^\top (\Phi^\top \Phi)^{-1} \phi \geq \beta$. When this inequality holds our uncertainty about the estimate $\Ahat$ in the direction $\phi$ is large. As shown in Section~\ref{sec:proof}, such feature vectors can be encountered only for a small number of iterations.

Finally, trajectory planning is computationally intractable in general. However, in this work we quantify the data requirements of identifying $\Astar$, leaving computational considerations for future work. We assume access to a computational oracle. This assumption is reasonable since trajectory planning is often solved successfully in practice \cite{kavraki1996probabilistic, lavalle2001randomized, zucker2013chomp}. 

\subsection{Trajectory tracking}
\label{sec:track}

Now we detail the trajectory tracking component of our method. We saw that the trajectory planner produces a reference trajectory $\{(\vb x_j^R, \vb u_j^R)\}_{j = 0}^r$, with $r \leq H$. However, the planner assumes no process noise to generate this reference trajectory. Therefore, if we were to simply plug-in the sequence of actions $\{\vb u_j^R\}_{j = 0}^r$ into \eqref{eq:system}, the states of the system would diverge from $\vb x_j^R$. Instead, after observing each state $\vb x_t$ of the system \eqref{eq:system}, our method chooses an input $\vb u_t$  as follows: 

\begin{itemize}
\item Given the current state $\vb x_t$, our method chooses an input $\vb u_t$ such that 
\begin{align*}
\phi(\vb x_t, \vb u_t)^\top (\Phi^\top \Phi)^{-1} \phi(\vb x_t, \vb u_t) \geq \beta
\end{align*} 
if there exists such an input. In other words, if there is an opportunity to greedily collect an informative measurement, our method takes it. If this situation is encountered, the trajectory tracker increments $t$ by $1$ and then stops tracking and returns.

\item If there is no opportunity for greedy exploration, our method chooses an input $\vb u_t$ that minimizes $\norm{\Ahat (\phi(\vb x_t,\vb  u_t) - \phi(\vb x_j^R, \vb u_j^R))}$ and then increments $t$ and $j$ by one ($t$ indexes the time steps of the system \eqref{eq:system} and $j$ indexes the reference trajectory). Therefore, our method uses closed loop control for data generation since minimizing $\norm{\Ahat (\phi(\vb x_t,\vb  u_t) - \phi(\vb x_j^R, \vb u_j^R))}$ requires access to the current state $\vb x_t$. At time $t$ we choose $\vb u_t$ in this fashion in order to minimize the tracking error $\EE \norm{\vb x_{t + 1} - \vb x_{t + 1}^R}^2$ at the next time step, where the expectation is taken with respect to $\vb w_{t}$. 

\item Our method repeats these steps until $j = r$, i.e. until it reaches the end of the reference trajectory. When $j = r$ the trajectory tracker sets $\vb u_t = \vb u_j^R$, increments $t$ by one, and returns. 
\end{itemize}


\section{General Guarantee on Estimation}
\label{sec:ols}

In this section we provide a general upper bound on the error between an OLS estimate $\widehat{A}$ and the true parameters $\Astar$. 
The guarantee is based on the work of \citet{simchowitz2018learning}. However, these types of results have been previously used in the study of online least squares and linear bandits \cite{abbasi2011online, dani2008stochastic,rusmevichientong2010linearly}. 
We assume that we are given a sequence of observations $\{(\vb x_t, \vb u_t, \vb x_{t + 1})\}_{t \geq 0}$ generated by the system \eqref{eq:system}, 
with $u_t$ allowed to depend on $\vb x_0$, $\vb x_1$, \ldots, $\vb x_{t - 1}$ and independent of $\vb w_j$ for all $j \geq t$.   
In what follows we denote $\phi_t := \phi(\vb x_t, \vb u_t)$. 

Our method re-estimates the parameters $\Astar$ as more data is being collected. For the purpose of this section let us denote by $\Ahat_j$ the OLS estimate obtained using the first $j$ measurements $(\vb x_t, \vb u_t, \vb x_{t + 1})$, i.e.
\begin{align}
\label{eq:ols}
\Ahat_j = \arg \min_{A} \sum_{t = 0}^{j - 1} \norm{A \phi_t - \vb x_{t + 1}}^2.
\end{align}

\begin{prop}
\label{prop:ols}
If the dynamical system \eqref{eq:system} satisfies Assumptions~\ref{as:noise} and \ref{as:bounded} and if $\lambda_{\min} \left(\sum_{t = 0}^{t_0 - 1} \phi_t \phi_t^\top \right) \geq \underline{\lambda}$ for some $\underline{\lambda} > 0$ and $t_0 > 0$, the OLS estimates \eqref{eq:ols} satisfy 
\begin{align}
\PP \left[\exists u \in \Sb^{k - 1} \text{ and } j \geq t_0 \text{ s.t. } \norm{(\Ahat_j - \Astar ) u }\geq \mu_j \sqrt{u^\top \left(\sum_{t = 0}^{j - 1} \phi_t \phi_t^\top\right)^{-1} u} \right] \leq \delta,
\end{align}
\end{prop}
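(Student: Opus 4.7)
The plan is to reduce to a standard self-normalized martingale tail bound in the style of Abbasi-Yadkori et al.\ and Simchowitz et al. Let $V_j := \sum_{t=0}^{j-1} \phi_t \phi_t^\top \in \R^{k \times k}$ and $S_j := \sum_{t=0}^{j-1} \phi_t \vb w_t^\top \in \R^{k \times d}$. The OLS normal equations give $(\Ahat_j - \Astar)^\top = V_j^{-1} S_j$, so for any $u \in \Sb^{k-1}$,
\begin{align*}
\norm{(\Ahat_j - \Astar) u}^2 = u^\top V_j^{-1} S_j S_j^\top V_j^{-1} u \leq \bignorm{S_j^\top V_j^{-1/2}}^2 \cdot u^\top V_j^{-1} u.
\end{align*}
So it is enough to prove a high-probability bound of the form $\norm{S_j^\top V_j^{-1/2}}^2 \leq \mu_j^2$, valid simultaneously for all $j \geq t_0$.

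For the self-normalized piece, fix $y \in \Sb^{d-1}$ and consider the $\R^k$-valued martingale $M_j(y) := \sum_{t<j} (y^\top \vb w_t) \phi_t$, adapted to $\mathcal{F}_t = \sigma(\vb x_0, \ldots, \vb x_{t+1})$. Since $\phi_t$ is $\mathcal{F}_{t-1}$-measurable and $y^\top \vb w_t$ is zero-mean and $b_w$-sub-Gaussian by Assumption~\ref{as:noise}, the standard supermartingale argument (with regularizer dominated by the initial Gram matrix, whose minimum eigenvalue is at least $\underline{\lambda}$ by hypothesis) yields an anytime bound
\begin{align*}
\norm{M_j(y)}_{V_j^{-1}}^2 \leq c \, b_w^2 \bigl( \log \det(V_j) + \log(1/\delta') \bigr)
\end{align*}
for all $j \geq t_0$ with probability $1 - \delta'$. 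Since $\norm{\phi_t} \leq b_\phi$ by Assumption~\ref{as:bounded}, AM-GM on the eigenvalues of $V_j$ gives $\log\det(V_j) \leq k \log(b_\phi^2 j)$ up to constants.

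To pass from a fixed $y$ to the operator norm $\norm{S_j^\top V_j^{-1/2}} = \sup_{y \in \Sb^{d-1}} \norm{M_j(y)}_{V_j^{-1}}$, I discretize $\Sb^{d-1}$ with a $(1/2)$-net of cardinality at most $5^d$, union bound the self-normalized inequality over the net, and absorb the factor-of-two discretization slack. This produces the $d$ term inside the square root of $\mu_j$. To make the statement uniform in the time index $j$ beyond what the anytime bound already provides, I split the failure probability by $\delta_j = 6\delta/(\pi^2 j^2)$, which produces the $\log(\pi^2 j^2/(6\delta))$ contribution matching the form appearing in Assumption~\ref{as:initial} and Theorem~\ref{thm:main}. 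Combining gives $\mu_j^2 \asymp b_w^2 \bigl( d + k \log(b_\phi^2 j) + \log(\pi^2 j^2/(6 \delta)) \bigr)$.

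The main obstacle is bookkeeping the dimensions: done naively, one would get a factor of $dk$ inside the log, which would be too loose for Theorem~\ref{thm:main}. The fix is to separate the two sources of complexity -- the covering of $\Sb^{d-1}$ contributes $d$ additively, while the log-determinant of $V_j$ contributes $k \log(b_\phi^2 j)$ additively -- rather than multiplicatively. A secondary technicality is that the self-normalized supermartingale argument is usually written with a fixed regularizer $\lambda I$; here I use the assumption $\lambda_{\min}(V_{t_0}) \geq \underline{\lambda}$ to absorb the regularizer into the initial Gram matrix so that the final bound is stated in the unregularized $V_j^{-1}$ weighted norm, which is what Algorithm~\ref{alg:traj_opt_exploration} uses when measuring uncertainty.
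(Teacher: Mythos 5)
Your proof is correct and takes essentially the same route as the paper: the paper's own proof uses the identical decomposition $\Ahat_j - \Astar = W_j^\top \Phi_j (\Phi_j^\top \Phi_j)^{-1}$, bounds $\norm{(\Ahat_j - \Astar)u} \leq \norm{W_j^\top U}\sqrt{u^\top (\Phi_j^\top \Phi_j)^{-1} u}$ via the SVD $\Phi_j = U \Sigma V^\top$ (your quantity $\norm{S_j^\top V_j^{-1/2}}$ equals $\norm{W_j^\top U}$ exactly, since $S_j^\top V_j^{-1/2} = W_j^\top U V^\top$), and applies the same $6\delta/(\pi^2 j^2)$ union bound over $j$; the only difference is that the paper cites \citet{simchowitz2018learning} as a black box for $\norm{W_j^\top U} \leq \mu_j$, while you re-derive that bound via the self-normalized martingale plus $5^d$-net argument, which is precisely the content of the cited result. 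One bookkeeping slip worth fixing: your simplification $\log\det(V_j) \leq k \log(b_\phi^2 j)$ drops the $\underline{\lambda}$-dependence, whereas carrying the determinant ratio from your own regularizer-absorption step gives $k\log(b_\phi^2 j / \underline{\lambda})$, matching the paper's $\mu_j$ — harmless when $\underline{\lambda} \geq 1$ (as in the theorem's application, via Assumption~\ref{as:initial}), but needed for the proposition as stated with general $\underline{\lambda} > 0$.
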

where $\mu_j = c_3 b_w \sqrt{d + k \log\left(\frac{b_\phi^2 j}{\underline{\lambda}}\right) + \log\left(\frac{\pi^2 j^2}{6 \delta} \right)}$ for some universal constant $c_3$. 
\begin{proof}
By assumption $\lambda_{\min} \left(\sum_{t = 0}^{t_0 - 1} \phi_t \phi_t^\top \right) \geq \underline{\lambda} > 0$. Therefore, $\sum_{t = 0}^{j - 1} \phi_t \phi^\top_t$ is invertible and 
\begin{align}
\Ahat_j - \Astar = W_j^\top \Phi_j (\Phi_j^\top \Phi_j)^{-1},
\end{align}
where $W_j^\top = [\vb w_0, \ldots, \vb w_{j - 1}]$ and $\Phi_j^\top = [\phi_0, \ldots, \phi_{j - 1}]$. Now, we fix the index $j$ and we consider the SVD decomposition $\Phi_{j} = U \Sigma V^\top$. Therefore, $\Ahat_j - \Astar = W_j^\top U \Sigma^\dagger V^\top$.

Recall that $\sup_{\vb x, \vb u}\|\phi(\vb x, \vb u)\|_2 \leq b_\phi$ by assumption. Then, according to the analysis of \citet{simchowitz2018learning} we know that 
$ \norm{W_j^\top U } \leq \mu_j$ with probability at least $1 - 6\delta / (\pi^2j^2)$. Note that for all $u\in \Sb^{k - 1}$ we have
\begin{align*}
\norm{ (\Ahat_j - A_\star) u } &\leq \norm{W_j^\top U } \norm{\Sigma^\dagger V^\top u} = \norm{W_j^\top U} \sqrt{ u^\top V (\Sigma^\dagger)^\top \Sigma^\dagger V^\top u} \\
&=  \norm{W_j^\top U} \sqrt{u^\top (\Phi_j^\top \Phi_j)^{-1}u}.
\end{align*}
Therefore, for a fixed index $j$, we have 
\begin{align}
\PP \left[\exists u \in \Sb^{k - 1} \text{ s.t. }\|(\Ahat_j - \Astar ) u \|_2 \geq \mu_j \sqrt{u^\top \left(\sum_{t = 0}^{j - 1} \phi_t \phi_t^\top\right)^{-1} u} \right] \leq \frac{6 \delta}{\pi^2 j^2}. 
\end{align}
A direct application of the union bound yields the desired conclusion. 
\end{proof}


\section{Proof of Theorem~\ref{thm:main}}
\label{sec:proof}

First let us observe that when $b_w = 0$ the result is trivial. Because we assume access to an initial trajectory $\Dcal$ which satisfies Assumption~\ref{as:initial} we are guaranteed $\Ahat = \Astar$ when $b_w = 0$. Therefore, we can assume that $b_w > 0$, which implies that $\alpha$ must be strictly positive according to Assumption~\ref{as:controllable}. Throughout the proof we denote $\phi_t  := \phi(\vb x_t, \vb u_t)$ and $\phi_j^R := \phi(\vb x_j^R, \vb u_j^R)$. 

The proof of our result has three parts, which we now outline:
\begin{itemize}
\item We show that the trajectory planning step in Algorithm~\ref{alg:traj_opt_exploration} is always feasible. 
\item We show that during the execution of Algorithm~\ref{alg:traj_opt_exploration} there are at most $N_e$ iterations in which either:
\begin{align}
\max_{\vb u \in \BB_{r_u}} \phi(\vb x_t, \vb u)^\top (\Phi^\top \Phi)^{-1} \phi(\vb x_t, \vb u) \geq \beta \;\text{ or }\; (\phi_j^R)^\top (\Phi^\top \Phi)^{-1} \phi_j^R \geq \beta,
\end{align}
for some $t$ and $j$.

\item We show that Algorithm~\ref{alg:traj_opt_exploration} collects at least $T/H - N_e$ measurements $(\phi_t, \vb x_{t + 1})$ such that $|\langle \phi_t, v\rangle| \geq \alpha/ 4$, where $v$ is a minimal eigenvector used to plan the reference trajectories. As a consequence, we show that Algorithm~\ref{alg:traj_opt_exploration} collects measurements $(\phi_t, \vb x_{t + 1})$ such that 
\begin{align}
\label{eq:min_eig_alg}
\lambda_{\min} \left(\sum_{t = 1}^{T + t_0} \phi_t \phi_t^\top \right) \geq \Ocal(1) \alpha^2 \left(\frac{T}{H} - N_e\right) - \frac{k - 1}{2}b_\phi^2. 
\end{align}
\end{itemize}

Once we have shown \eqref{eq:min_eig_alg} is true, Theorem~\ref{thm:main} follows from Proposition~\ref{prop:ols} and some algebra. 

\subsection*{Part 1 of the proof of Theorem~\ref{thm:main}.} 
We show that the trajectory planning step of Algorithm~\ref{alg:traj_opt_exploration} is always feasible. 
Let 
\begin{align*}
\mu = c_3 b_w \sqrt{d + k \log\left(b_\phi^2 T\right) + \log\left(\frac{\pi^2 T^2}{6 \delta} \right)},
\end{align*} 
where $c_3$ is the universal constant appearing in Proposition~\ref{prop:ols}. Since Assumption~\ref{as:initial} guarantees that the minimum eigenvalue of the design matrix is at least $1$, we know that
\begin{align}
\label{eq:ols_part1}
\norm{(\Ahat - \Astar)\phi} \leq \mu \sqrt{\phi^\top \left(\Phi^\top \Phi\right)^{-1}\phi},
\end{align}
for all $\phi\in \Sb^{k-1}$ and all iterations of Algorithm~\ref{alg:traj_opt_exploration} with probability $1-\delta$. 

Now, let $\beta = \ctbeta \left(d + k \log(\beta_\phi^2 T) + \log(\pi^2 T^2 / (6\delta))\right)^{-1}$ with $\ctbeta \leq \ctcontrol^2 / (4 \ctols^2)$. Then, since $\alpha \geq \ctcontrol L b_w (1 + \gamma + \ldots + \gamma^{H - 1})$, we have 
\begin{align}
\label{eq:beta_bound}
\beta \leq \left(\frac{\alpha}{2 L (1 + \gamma + \ldots + \gamma^{H - 1})\mu}\right)^2. 
\end{align}

Let us $\tilde{\vb x}_0$ be equal to the initial state $\vb x_0^R$ of the trajectory planning and let $v \in \RR^k$ be the desired goal direction. 
By Assumption~\ref{as:controllable} we know that there must exist a sequence of inputs $\tilde{\vb u}_0$, $\tilde{\vb u}_1$, \ldots, $\tilde{\vb u}_r$, with $r \leq H$ and $\norm{\tilde{\vb u}_j} \leq b_u / 2$, such that 
$|\langle \phi(\tilde{\vb x}_r, \tilde{\vb u}_r) , v \rangle | \geq \alpha$, where $\tilde{\vb x}_{j + 1} = \Astar \phi(\tilde{\vb x}_j, \tilde{\vb u}_j)$.  
Now, let $\vb x_{j + 1}^R = \Ahat \phi(\vb x_j^R, \vb u_j^R)$, where $\vb u_j^R$ is any input vector with $\norm{\vb u_j^R} \leq b_u$ such that 
\begin{align}
\label{eq:ur_choice}
\norm{\Astar[\phi(\vb x_j^R, \vb u_j^R) - \phi(\tilde{\vb x}_j, \tilde{\vb u}_j)]} \leq \gamma\norm{\vb x_j^R - \tilde{\vb x}_j} 
\end{align}
for $j < r$.
Assumption~\ref{as:controllable} guarantees the existence of $\vb u_j^R$. We set $\vb u_r^R = \tilde{\vb u}_r$ and denote $\tilde{\phi}_j = \phi(\tilde{\vb x}_j, \tilde{\vb u}_j)$ and $\phi_j^R = \phi(\vb x_j^R, \vb u_j^R)$.

\paragraph{Case 1.} There exists $j \in \{0,1,2, \ldots, r\}$ such that $(\phi_j^R)^\top \left(\Phi^\top \Phi\right)^{-1} \phi_j^R \geq \beta$. 
If this is the case, we are done because we found a feasible sequence of inputs $\vb u^R_0$, $\vb u^R_1$, \ldots, $\vb u^R_j$. 

\paragraph{Case 2.} We have $(\phi_j^R)^\top \left(\Phi^\top \Phi\right)^{-1} \phi_j^R \leq \beta$ for all $j\in \{0,1,2, \ldots, r\}$. In this case, we have 

\begin{align*}
\tilde{\vb x}_{j + 1} - \vb x_{j + 1}^R &= \Astar \tilde{\phi}_j - \Ahat \phi_j^R = \Astar (\tilde{\phi}_j - \phi_j^R) + (\Astar - \Ahat) \phi_j^R. 
\end{align*}
Therefore, using \eqref{eq:ols_part1}, \eqref{eq:beta_bound}, and \eqref{eq:ur_choice} we find
\begin{align*}
\norm{\tilde{\vb x}_{j + 1} - \vb x_{j + 1}^R} &\leq \norm{\Astar (\tilde{\phi}_j - \phi_j^R)} + \norm{(\Astar - \Ahat) \phi_j^R}\\
&\leq \gamma \norm{\tilde{\vb x}_j - \vb x_j^R} + \frac{\alpha}{2 L (1 + \gamma + \ldots + \gamma^{H - 1})}.
\end{align*}
Applying this inequality recursively, we find $\norm{\tilde{\vb x}_{r} - \vb x_{r}^R} \leq \frac{\alpha}{2L}$, which implies 
$|\langle \phi_r^R, v \rangle | \geq \alpha / 2$ because $\norm{\phi_r^R - \tilde{\phi}_r} \leq L \norm{\tilde{\vb x}_{r} - \vb x_{r}^R}$ by Assumption~\ref{as:lip} and $|\langle \tilde{\phi}_r , v \rangle | \geq \alpha$ by construction.
Hence, we constructed a feasible sequence of inputs $\{\vb u_j\}_{j = 0}^r$ and Part 1 of the proof is complete.

\subsection*{Part 2 of the proof of Theorem~\ref{thm:main}.} 

Now, we show that the number of iterations Algorithm~\ref{alg:traj_opt_exploration} encounters 
\begin{align}
\label{eq:fast_shrink}
\max_{\vb u \in \BB_{r_u}} \phi(\vb x_t, \vb u)^\top (\Phi^\top \Phi)^{-1} \phi(\vb x_t, \vb u) \geq \beta \;\text{ or }\; (\phi_j^R)^\top (\Phi^\top \Phi)^{-1} \phi_j^R \geq \beta
\end{align}
is upper bounded by 
\begin{align}
\label{eq:max_num_ellipsoid}
N_e := \left\lceil \frac{2k \log\left(\frac{2k b_\phi^2}{\log(1 + \beta / 2)}\right)}{\log(1 + \beta / 2)} \right \rceil. 
\end{align}  
We rely on the following proposition whose proof is deferred to Appendix~\ref{app:ellipsoid}. 

\begin{restatable}{prop}{ellipsoid}
\label{prop:ellipsoid_update}
Let $M_0$ be a positive definite matrix and let us consider a sequence of vectors $\{v_t\}_{t \geq 1}$ in $\RR^k$ with $\max_{t \geq 1} \norm{v_t} \leq b$. Then, the number of vectors $v_{t + 1}$ such that 
\begin{align*}
v_{t + 1}^\top \left(M_0 + \sum_{i = 1}^t v_i v_i^\top \right)^{-1} v_{t + 1} \geq \beta, 
\end{align*}
is upper bounded by
\begin{align}
\left\lceil \frac{2k \log\left(\frac{2k b^2}{\lambda_k(M_0) \log(1 + \beta)}\right)}{\log(1 + \beta)} \right \rceil. 
\end{align}  
\end{restatable}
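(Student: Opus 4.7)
The plan is to apply a standard elliptical potential argument to a well-chosen subsequence of the stream. Let $M_t := M_0 + \sum_{i=1}^t v_i v_i^\top$ and let $t_1 < t_2 < \cdots < t_N$ enumerate the indices satisfying $v_{t_j}^\top M_{t_j-1}^{-1} v_{t_j} \geq \beta$; the goal is to upper bound $N$. The key observation is that if I restrict attention to the selected indices by defining $\tilde{M}_j := M_0 + \sum_{i=1}^j v_{t_i} v_{t_i}^\top$, then $\tilde{M}_{j-1} \preceq M_{t_j-1}$ (the latter contains strictly more positive semidefinite terms), so $\tilde{M}_{j-1}^{-1} \succeq M_{t_j-1}^{-1}$, and the inequality $v_{t_j}^\top \tilde{M}_{j-1}^{-1} v_{t_j} \geq \beta$ is inherited along the subsequence.

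Next I would apply the matrix determinant lemma to the restricted sequence: $\det(\tilde{M}_j) = \det(\tilde{M}_{j-1})\bigl(1 + v_{t_j}^\top \tilde{M}_{j-1}^{-1} v_{t_j}\bigr) \geq (1+\beta)\det(\tilde{M}_{j-1})$, which telescopes to $\det(\tilde{M}_N) \geq \det(M_0)(1+\beta)^N$. For the matching upper bound, since $\|v_{t_i}\| \leq b$ implies $v_{t_i} v_{t_i}^\top \preceq b^2 I$, summing gives $\tilde{M}_N \preceq M_0 + Nb^2 I$, and diagonalizing in the eigenbasis of $M_0$ yields $\det(\tilde{M}_N) \leq \prod_{i=1}^k (\lambda_i(M_0) + Nb^2)$.

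Combining the two bounds and dividing by $\det(M_0) = \prod_i \lambda_i(M_0)$, then weakening each factor by using $\lambda_k(M_0) = \lambda_{\min}(M_0)$, I get
\begin{align*}
(1+\beta)^N \leq \prod_{i=1}^k \left(1 + \frac{Nb^2}{\lambda_i(M_0)}\right) \leq \left(1 + \frac{Nb^2}{\lambda_k(M_0)}\right)^k,
\end{align*}
so that $N \log(1+\beta) \leq k \log\bigl(1 + Nb^2/\lambda_k(M_0)\bigr)$. The remaining step is to invert this transcendental inequality. Setting $Y := 2kb^2 / \bigl(\lambda_k(M_0)\log(1+\beta)\bigr)$, a direct substitution of $N = N_e$ into the above reduces to verifying $1 + Y\log Y < Y^2$, which holds whenever $Y \geq 2$; the degenerate regime where the argument of the logarithm in $N_e$ is below this threshold is handled separately, since then the condition $v_{t+1}^\top M_t^{-1} v_{t+1} \geq \beta$ is satisfied at most a trivial number of times (the inequality $v^\top M_t^{-1} v \leq b^2/\lambda_k(M_0)$ already rules it out when $\lambda_k(M_0)$ is large relative to $b^2/\beta$).

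The conceptual core of the proof is the subsequence trick in the first step: it is what allows the determinant-trace argument to yield a bound on $N$ that is independent of the total length of the stream. The main obstacle is the final algebraic inversion of the transcendental inequality into the specific form stated in the proposition, though it is entirely elementary once $Y$ is introduced.
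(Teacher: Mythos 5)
Your argument is correct, but it is a genuinely different route from the paper's. Both proofs extract the same determinant growth $\det(N_{t+1}) \geq (1+\beta)\det(N_t)$ at every selected step --- the paper does this somewhat laboriously via Sherman--Morrison plus the volume-ratio lemma of Abbasi-Yadkori et al.\ (Lemma~\ref{lem:ellipsoid_volume}), where your matrix determinant lemma gives it in one line --- but the paper never restricts to a subsequence. Because the unselected vectors also inflate $\det(M_t)$, a global determinant-versus-trace comparison on the \emph{full} stream would yield a bound depending on the stream length, and the paper escapes this with a per-eigenvalue bookkeeping argument: writing $\lambda_i(t) = (1+\beta)^{\varepsilon_{i,t}}\lambda_i(t-1)$, noting $\sum_{i=1}^k \varepsilon_{i,t} \geq 1$ at each selected step while the cap $\lambda_i(t+1) \leq \lambda_i(t) + b^2$ throttles each exponent once its running sum exceeds $\rho = \log\bigl(2kb^2/(\lambda_k(M_0)\log(1+\beta))\bigr)/\log(1+\beta)$, and then summing over steps. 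Your subsequence trick --- $\tilde{M}_{j-1} \preceq M_{t_j-1}$, hence the condition $v_{t_j}^\top \tilde{M}_{j-1}^{-1} v_{t_j} \geq \beta$ is inherited, so it suffices to bound the count on a restricted stream whose length \emph{equals} the count --- removes exactly the difficulty that bookkeeping was built to solve, reducing everything to the standard elliptical-potential count $(1+\beta)^N \leq \det(\tilde{M}_N)/\det(M_0) \leq \bigl(1 + Nb^2/\lambda_k(M_0)\bigr)^k$ plus an elementary inversion; it is shorter, less error-prone (the paper's final lines in Appendix~\ref{app:ellipsoid} in fact drop the $1/\log(1+\beta)$ factor relative to the stated bound), and yields the same $N_e$. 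Two small points to tighten in your write-up: when inverting the transcendental inequality, note that $h(N) = k\log\bigl(1+Nb^2/\lambda_k(M_0)\bigr) - N\log(1+\beta)$ is concave with $h(0) = 0$, so $\{N \geq 0 : h(N) \geq 0\}$ is an interval and checking $h(N_e) < 0$ (your condition $1 + Y\log Y < Y^2$) indeed forces $N < N_e$ rather than merely excluding the single value $N_e$; and in the degenerate regime $Y < 2$ your observation closes the case completely, since then $v^\top M_t^{-1} v \leq b^2/\lambda_k(M_0) < \log(1+\beta)/k \leq \beta$ for all $t$, so the count is exactly zero.
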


According to Proposition~\ref{prop:ellipsoid_update}, to prove \eqref{eq:max_num_ellipsoid} it suffices to show that during each iteration of Algorithm~\ref{alg:traj_opt_exploration} with \eqref{eq:fast_shrink} our method collects a measurement $(\phi_t, \vb x_{t + 1})$ such that $\phi_t^\top (\Phi^\top \Phi)^{-1} \phi_t \geq \beta / 2$. 

By the definition of our trajectory tracker, whenever $\sup_{\vb u} \phi(\vb x_t, \vb u)^\top (\Phi^\top \Phi)^{-1} \phi(\vb x_t, \vb u) \geq \beta$ we collect a measurement $(\phi_t, \vb x_{t + 1})$ such that $\phi_t^\top (\Phi^\top \Phi)^{-1} \phi_t \geq \beta$. 

Next, we show that when $(\phi_j^R)^\top (\Phi^\top \Phi)^{-1} \phi_j^R \geq \beta$, for some $j \leq r$, Algorithm~\ref{alg:traj_opt_exploration} is guaranteed to collect a measurement $(\phi_t, \vb x_{t + 1})$ such that $\phi_t^\top (\Phi^\top \Phi)^{-1} \phi_t \geq \beta / 2$. Let $s$ be the smallest index in the reference trajectory such that $(\phi_s^R)^\top (\Phi^\top \Phi)^{-1} \phi_s^R \geq \beta$. 

For the remainder of this section we re-index the trajectory $\{(\vb x_t, \vb u_t)\}_{t \geq 0}$ collected by Algorithm~\ref{alg:traj_opt_exploration} so that $\vb x_j^R = \vb x_j$ for all $j \in \{0, 1, \ldots, s\}$. Then, we show that $(\phi_s^R)^\top (\Phi^\top \Phi)^{-1} \phi_s^R \geq \beta$ implies the existence of $j \in \{0,1,\ldots, s\}$ such that $\phi_j^\top (\Phi^\top \Phi)^{-1} \phi_j \geq \beta / 2$.

Let $\Delta = \phi_s^R - \phi_s$. The Cauchy-Schwarz inequality implies
\begin{align}
\phi_s (\Phi^\top \Phi)^{-1} \phi_s &= \phi_s^R (\Phi^\top \Phi)^{-1} \phi_s^R + \Delta^\top (\Phi^\top \Phi)^{-1} \Delta + 2\Delta^T (\Phi^\top \Phi)^{-1} \phi_s^R \nonumber \\
&\geq \left(\sqrt{\phi_s^R (\Phi^\top \Phi)^{-1} \phi_s^R} - \sqrt{\Delta^\top (\Phi^\top \Phi)^{-1} \Delta}\right)^2.
\label{eq:delta}
\end{align}
Then, as long as $\Delta^\top (\Phi^\top \Phi)^{-1} \Delta \leq \frac{\beta}{2}(3 - 2\sqrt{2})$, we are guaranteed to have $\phi_s^\top (\Phi^\top \Phi)^{-1} \phi_s \geq \beta / 2$.

Now, since $s$ is the smallest index such that $(\phi_s^R)^\top (\Phi^\top \Phi)^{-1} \phi_s^R \geq \beta$, we know that for all $j \in \{0,1, \ldots, s - 1\}$ we have
$(\phi_j^R)^\top (\Phi^\top \Phi)^{-1}  \phi_j^R \leq \beta$. Also, we can assume that during reference tracking we do not encounter a state $\vb x_j$, with $j \in \{0,1,\ldots, s - 1\}$, such that 
\begin{align*}
\max_{\vb u \in \BB_{r_u}} \phi(\vb x_j, \vb u)^\top (\Phi^\top \Phi)^{-1} \phi(\vb x_t, \vb u) \geq \beta
\end{align*} 
because we already discussed this case. Now, let us consider the difference 
\begin{align*}
\vb x_{j + 1} - \vb x_{j + 1}^R &= \Astar \phi_j + \vb w_j - \Ahat \phi_j^R = (\Astar - \Ahat)\phi_j + \vb w_j - \Ahat [\phi_j^R - \phi_j].
\end{align*}

Therefore, we obtain
\begin{align*}
\norm{\vb x_{j + 1} - \vb x_{j + 1}^R} \leq \norm{(\Astar - \Ahat)\phi_j} + b_w + \norm{\Ahat [\phi_j^R - \phi_j]}. 
\end{align*}
Let us denote $\delta_j(\vb u) = \phi(\vb x_j, \vb u) - \phi_j^R$. Hence, $\delta_j(\vb u_j) = \phi_j - \phi_j^R$. Now, let $\vb u_\star \in \BB_{r_u}$ an input such that $\norm{\Astar \delta_t(\vb u_\star)} \leq \gamma \norm{\vb x_j - \vb x_j^R}$, which we know exists by Assumption~\ref{as:control} (note that $\vb u_\star$ depends on the index $j$, but we dropped this dependency from the notation for simplicity). Since our method attempts trajectory tracking by choosing $\vb u_j \in \arg \min_{\vb u \in \BB_{r_u}} \norm{\Ahat (\phi(\vb x_t,\vb  u) - \phi(\vb x_j^R, \vb u_j^R))}$ we have 
\begin{align*}
\norm{\Ahat \delta_j(\vb u_j)} &\leq \norm{\Ahat \delta_j(\vb u_\star)} \leq \norm{\Astar \delta_j(\vb u_\star)} + \norm{(\Astar - \Ahat) \delta_j(\vb u_\star)} \\
&\leq \gamma \norm{\vb x_j - \vb x_j^R} + \norm{(\Astar - \Ahat) \delta_j(\vb u_\star)}\\
&\leq \gamma \norm{\vb x_j - \vb x_j^R} + \norm{(\Astar - \Ahat)\phi(\vb x_j, \vb u_\star)} + \norm{(\Astar - \Ahat)\phi_j^R}.
\end{align*}
As mentioned above, we can assume $\phi(\vb x_j, \vb u_\star)^\top (\Phi^\top \Phi)^{-1} \phi(\vb x_j, \vb u_\star) < \beta$. Also, recall that 
\begin{align*}
(\phi_j^R)^\top (\Phi^\top \Phi)^{-1}  \phi_j^R \leq \beta
\end{align*} 
since $j < s$ and $s$ is the smallest index so that this inequality does not hold. 
Hence, Proposition~\ref{prop:ols} implies that $\norm{(\Astar - \Ahat)\phi(\vb x_t, \vb u_\star)} \leq \mu \sqrt{\beta}$ and $\norm{(\Astar - \Ahat)\phi_r^R} \leq \mu \sqrt{\beta}$. Putting everything together we find 
\begin{align*}
\norm{\vb x_{j + 1} - \vb x_{j + 1}^R} \leq \gamma \norm{\vb x_j - \vb x_j^R} + 3 \mu \sqrt{\beta} + b_w.
\end{align*}
Then, since the reference trajectory is initialized with the state $\vb x_{0}^R = \vb x_0$, we find 
\begin{align*}
\norm{\Delta} = \norm{\phi_s - \phi_{s}^R} &\leq L(b_w + 3\mu \sqrt{\beta}) (1 + \gamma + \ldots + \gamma^{s - 1})\\
&= (3\ctols \sqrt{\ctbeta} + 1) L b_w (1 + \gamma + \ldots + \gamma^{s - 1}),
\end{align*}
where the last identity follows because $\mu \sqrt{\beta} = \ctols \sqrt{\ctbeta} b_w$. 

Then, as long as $\ctinitial \geq \frac{2(3\ctols \sqrt{\ctbeta} + 1)^2}{(3 - 2\sqrt{2})\ctbeta}$, Assumption~\ref{as:initial} offers a lower bound on $\lambda_{\min} (\Phi^\top \Phi)$ which ensures that $\Delta^\top (\Phi^\top \Phi)^{-1} \Delta \leq \frac{\beta}{2}(3 - 2\sqrt{2})$, implying $\phi_s^\top (\Phi^\top \Phi)^{-1} \phi_s \geq \beta / 2$.

To summarize, we have shown whenever Algorithm~\ref{alg:traj_opt_exploration} encounters a situation in which either 
\begin{align}
\label{eq:fast_shrink_2}
\sup_{\vb u} \phi(\vb x_t, \vb u)^\top (\Phi^\top \Phi)^{-1} \phi(\vb x_t, \vb u) \geq \beta \;\text{ or }\; (\phi_j^R)^\top (\Phi^\top \Phi)^{-1} \phi_j^R \geq \beta,
\end{align}
it collects a measurement $(\phi_t, \vb x_{t + 1})$ such that $\phi_t (\Phi^\top \Phi)^{-1} \phi_t \geq \beta / 2$. Hence, according to Proposition~\ref{prop:ellipsoid_update}, the event \eqref{eq:fast_shrink_2} can occur at most $N_e$ times (the value $N_e$ was defined in \eqref{eq:max_num_ellipsoid}).

\subsection*{Part 3 of the proof of Theorem~\ref{thm:main}.} 

In this final part of the proof we analyze what happens when the trajectory planning problem returns a reference trajectory $(\vb x_j^R, \vb u_j^R)$ for which $|\langle \phi_r^R, v\rangle| \geq \alpha / 2$, where $v$ is a minimal eigenvector with unit norm of $\Phi^\top \Phi$.

We know that there will be at least $T/H$ reference trajectories produced during the run of the algorithm and from Part 2 of the proof we know that at least $T / H - N_e$ of the reference trajectories satisfy $|\langle \phi_r^R, v\rangle| \geq \alpha / 2$, with all states $\vb x_t$ encountered during tracking satisfying $\sup_{\vb u} \phi(\vb x_t, \vb u)^\top \left(\Phi^\top \Phi\right)^{-1} \phi(\vb x_t, \vb u) \leq \beta$ and $(\phi_j^R)^\top (\Phi^\top \Phi)^{-1}  \phi_j^R \leq \beta$ for all $j \in \{0,1,\ldots, r\}$. 
 
Following the same argument as in Part 2 of the proof we know that tracking the reference trajectory in this case takes the system to a state $\vb x_t$ such that 
\begin{align}
\norm{\vb x_t - \vb x_r^R} \leq (3\ctols \sqrt{\ctbeta} + 1) b_w  (1 + \gamma + \ldots + \gamma^{r - 1}),
\end{align}
which implies by Assumption~\ref{as:lip} that 
\begin{align}
\norm{\phi_t - \phi_{r}^R} \leq (3\ctols \sqrt{\ctbeta} + 1) L b_w (1 + \gamma + \ldots + \gamma^{r - 1}). 
\end{align}
This last inequality implies that $|\langle \phi_t, v\rangle| \geq \alpha / 4$ if $3 \ctols \sqrt{\ctbeta} + 1 \leq \ctcontrol / 2$. Recall that the only condition we imposed so far on $\ctbeta$ is $\ctbeta \leq \ctcontrol^2 / (4 \ctols^2)$ in Part 1 of the proof. Hence, since $\ctcontrol > 2$, we can choose $\ctbeta \leq \frac{(\ctcontrol - 2)^2}{36 \ctols^2}$ to ensure that $\ctbeta \leq \ctcontrol^2 / (4 \ctols^2)$ and $3 \ctols \sqrt{\ctbeta} + 1 < \ctcontrol / 2$. 
Now, to finish the proof of Theorem~\ref{thm:main} we rely on the following result, whose proof is deferred to Appendix~\ref{app:rank_one}. 

\begin{restatable}{prop}{rankone}
\label{prop:rank-one}
Let $\Vcal \subset \RR^k$ be a bounded set, with $\sup_{v\in \Vcal}\norm{v} \leq b$, such that for any $u \in \Sb^{k - 1}$ there exists $v\in \Vcal$ with $|\langle u, v \rangle| \geq \alpha$. Then, for all $T \geq 0$, given any sequence of vectors $\{v_t\}_{t \geq 0}$ in $\RR^k$ we have 
\begin{align*}
\lambda_{\min} \left(\sum_{i = 1}^T v_i v_i^\top \right) \geq \frac{\alpha^2 K(T)}{2k} - \frac{k - 1}{2}\left(b^2 - \frac{\alpha^2}{2}\right),
\end{align*}
where $K(T)$ is the number of times 
\begin{align*}
v_{t + 1} \in \left\{v | v\in \Vcal \text{ and } |\langle \tilde{v}_{t + 1}, v \rangle | \geq \alpha \right\}
\end{align*}
with $\tilde{v}_{t + 1} \in \arg \min_{\norm{v} = 1} v^\top \left(\sum_{i = 1}^t v_i v_i^\top\right)v$ and $t < T$.
\end{restatable}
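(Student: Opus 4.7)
The plan is an amortized potential function argument on the sorted eigenvalues of $M_t := \sum_{i \le t} v_i v_i^\top$. Write $0 \le \lambda_1(t) \le \cdots \le \lambda_k(t)$. Two classical facts govern the effect of a rank-one PSD update $v_{t+1}v_{t+1}^\top$: Weyl's inequality yields the monotonicity $\lambda_i(t+1) \ge \lambda_i(t)$, and Cauchy interlacing yields $\lambda_i(t+1) \le \lambda_{i+1}(t)$ for $i < k$, so no individual sorted eigenvalue can overshoot its successor.

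First I would extract the structural consequences of a useful step. After possibly flipping the sign of $\tilde v_{t+1}$, decompose $v_{t+1} = c_{t+1} \tilde v_{t+1} + w_{t+1}$ with $c_{t+1} \ge \alpha$, $w_{t+1} \perp \tilde v_{t+1}$, and $\|w_{t+1}\|^2 \le b^2 - \alpha^2$. Because $\tilde v_{t+1}$ lies in the minimum eigenspace of $M_t$, two immediate facts follow: $\tr(M_{t+1}) - \tr(M_t) = \|v_{t+1}\|^2 \ge \alpha^2$, and $\tilde v_{t+1}^\top M_{t+1} \tilde v_{t+1} = \lambda_1(t) + c_{t+1}^2 \ge \lambda_1(t) + \alpha^2$.

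The main step introduces, with $c := b^2 - \alpha^2/2$, the potential
\[
\Phi_t \;:=\; \sum_{i=1}^k \min\bigl(\lambda_i(t),\; \lambda_1(t) + (i-1)c\bigr).
\]
The upper bound $\Phi_t \le k \lambda_1(t) + \tfrac{k(k-1)}{2} c$ rearranges to $\lambda_1(t) \ge \Phi_t/k - \tfrac{k-1}{2} c$, which is exactly the shape of the claim. Because $\lambda_i(t)$ and the cap $\lambda_1(t) + (i-1)c$ are both non-decreasing in $t$, each summand of $\Phi_t$ is non-decreasing, so non-useful updates are harmless. The goal is the per-step amortized growth $\Phi_{t+1} - \Phi_t \ge \alpha^2/2$ on every useful addition; telescoping over the $K(T)$ useful additions gives $\Phi_T \ge \alpha^2 K(T)/2$, and combining with the upper bound yields $\lambda_{\min}(M_T) \ge \alpha^2 K(T)/(2k) - (k-1)(b^2 - \alpha^2/2)/2$.

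The hardest part is proving the per-step amortized growth, particularly when the minimum eigenvalue of $M_t$ has multiplicity $m > 1$ and so a useful addition need not immediately raise $\lambda_1$. The main handle is Cauchy interlacing applied to the restriction of $M_{t+1}$ to the minimum eigenspace $V_0$ of $M_t$: since $M_{t+1}|_{V_0} = \lambda_1(t) I_m + P_{V_0} v_{t+1} v_{t+1}^\top P_{V_0}$ has eigenvalues $\lambda_1(t)$ (with multiplicity $m-1$) and one eigenvalue at $\lambda_1(t) + \|P_{V_0} v_{t+1}\|^2 \ge \lambda_1(t) + \alpha^2$, interlacing pins $\lambda_{m-1}(t+1) = \lambda_1(t)$ and places $\lambda_m(t+1)$ in $[\lambda_1(t),\, \lambda_1(t) + \|P_{V_0} v_{t+1}\|^2]$. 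A case split on whether the trace increment flows strictly beneath the per-index caps (so $\Phi$ grows by the full $\|v_{t+1}\|^2 \ge \alpha^2$) or saturates some cap (so that summand gains at least the full budget $c \ge \alpha^2/2$, and in the saturation regime the multiplicity $m$ must be $1$, so $\lambda_1$ itself is forced upward and lifts every cap) then yields $\Phi_{t+1} - \Phi_t \ge \alpha^2/2$ uniformly, closing the amortized argument.
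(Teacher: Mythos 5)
Your potential $\Phi_t = \sum_{i=1}^k \min\bigl(\lambda_i(t),\, \lambda_1(t) + (i-1)c\bigr)$ with $c = b^2 - \alpha^2/2$ is an appealing repackaging: the reduction of the proposition to the per-step inequality $\Phi_{t+1} - \Phi_t \ge \alpha^2/2$ on useful steps is sound, monotonicity on non-useful steps is sound, and the resulting constants match the statement exactly. But that per-step inequality is the entire mathematical content of the result, and the case analysis you offer for it contains two false claims. First, when an update pushes a summand into saturation, that summand does not ``gain at least the full budget $c$'': if $\lambda_i(t)$ sits just below the cap $\lambda_1(t) + (i-1)c$ and $\lambda_1$ does not move, the gain of that summand can be arbitrarily small. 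Second, it is not true that in the saturation regime the minimum eigenvalue must be simple: take $k = 3$, $\lambda_1(t) = \lambda_2(t) = 0$, $\lambda_3(t) = 2c$, and a useful vector $v_{t+1} = \alpha \tilde v_{t+1} + \sqrt{b^2 - \alpha^2}\, u_3$ with $\tilde v_{t+1}$ in the two-dimensional minimum eigenspace. Here the top cap is saturated and absorbs the orthogonal mass, interlacing pins $\lambda_1(t+1) \le \lambda_2(t) = 0$, so $\lambda_1$ does not rise and no cap lifts; the whole gain must come from the new middle eigenvalue, the smaller root of a $2\times 2$ secular problem, and showing it is at least $\alpha^2/2$ requires a quantitative gap-versus-leakage estimate that Weyl plus Cauchy interlacing cannot supply --- interlacing by itself is consistent with zero gain in every uncapped coordinate whenever a saturated cap can absorb the trace increment.

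That estimate is precisely where the paper does its real work: Lemma~\ref{lem:rank-one} shows, via the Bunch--Nielsen--Sorensen secular equation, that if there is a spectral gap of size $b^2 - \alpha^2/2$ above the bottom eigenvalue block, then at most half of the aligned mass $\alpha^2$ leaks across the gap, so the bottom-block eigenvalue sum gains at least $\alpha^2/2$; the factor $1/2$ comes from the bound $1 < 1 + \zeta(\nu) \le 2$ on the contribution of the eigenvalues beyond the gap. Your cap spacing $c$ matches this gap for good reason, and a pigeonhole observation (cap $i$ saturated means $\lambda_i - \lambda_1 \ge (i-1)c$, so some single gap below index $i$ is at least $c$) suggests your potential argument could be completed by invoking such a lemma at that gap --- though you would still need bookkeeping to ensure the bottom-block gain is not itself swallowed by caps inside the block. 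As it stands, your proposal is an alternative outer skeleton (an amortized potential in place of the paper's induction over eigenvalue indices with recursive constants $c_j$, $d_j$ and stopping times $t_j$) wrapped around the same hard kernel, and that kernel --- the eigenvalue-perturbation control of leakage across a gap --- is missing from your argument.
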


We have shown that at least $T/H - N_e$ times the algorithm collects a state transition $(\vb x_t, \vb u_t, \vb x_{t + 1})$ for which $\phi_t$ is at least $\alpha / 4$ aligned with the minimal eigenvector of $\Phi^\top \Phi$, where $\Phi$ is the matrix of all $\phi_j$ observed prior to the last trajectory planning. Therefore, Proposition~\ref{prop:rank-one} implies that Algorithm~\ref{alg:traj_opt_exploration} collects a sequence of measurements $(\phi_t, \vb x_{t + 1})$ such that 
\begin{align*}
\lambda_{\min} \left(\sum_{t = 1}^{T + t_0} \phi_t \phi_t^\top \right) \geq \frac{\alpha^2}{32} \left(\frac{T}{H} - N_e\right) - \frac{k - 1}{2}b_\phi^2. 
\end{align*}

Putting together this result with Proposition~\ref{prop:ols} yields the desired conclusion.

\section{Related Work}
\label{sec:related}

System identification, being one of the cornerstones of control theory, has a rich history, which we cannot hope to summarize here. For an in-depth presentation of the field we direct the interested reader to the book by \citet{ljung1987system} and the review articles by \citet{aastrom1971system}, \citet{bombois2011optimal}, \citet{chiuso2019system}, \citet{hong2008model}, \citet{juditsky1995nonlinear}, \citet{ljung2020shift}, \citet{schoukens2019nonlinear}, and \citet{sjoberg1995nonlinear}. Instead, we discuss recent studies of system identification that develop finite time statistical guarantees. 

Most recent theoretical guarantees of system identification apply to linear systems under various sets of assumptions \cite{campi2002finite, dahleh1993sample, faradonbeh2018finite, fattahi2019learning, hardt2018gradient,hazan17,hazan18,oymak2019non, sarkar2018near,sarkar2019finite, sarkar2019nonparametric, simchowitz2018learning,simchowitz2019learning,tsiamis2019finite,tsiamis2019sample, wagenmaker2020active}. Notably, \citet{simchowitz2018learning} derived sharp rates for the non-adaptive estimation of marginally stable systems. Then, \citet{sarkar2018near} developed a more general analysis that also applies to a certain class of unstable linear systems. Both of these studies assumed that the estimation method can directly observe the state of the system. We make the same assumption in our work. However, in many applications full state observation is not possible. Recently, 
\citet{simchowitz2019learning} proved that marginally stable linear systems can be estimated from partial observations by using a prefiltered least squares method. 
From the study of linear dynamics, the work of \citet{wagenmaker2020active} is the closest to our own. Inspired by E-optimal design \cite{pukelsheim1993optimal}, the authors propose and analyze an adaptive data collection method for linear system identification which maximizes the minimal eigenvalue $\lambda_{\min}(\sum_{t = 0}^{T - 1} \vb x_t \vb x_t^\top)$ under power constraints on the inputs. \citet{wagenmaker2020active} prove matching upper and lower bounds for their method. 

Comparatively, there is little known about the sample complexity of nonlinear system identification. \citet{oymak2018stochastic} and \citet{bahmani2019convex} studied the estimation of the parameters $A$ and $B$ of a dynamical system of the form $\vb x_{t + 1} = \phi(A \vb x_t + B \vb u_t)$, where $\phi$ is a known activation function and the inputs $u_t$ are i.i.d. standard Gaussian vectors. Importantly, in this model both $\vb x_t$ and $\vb u_t$ are observed and there is no unobserved noise, which makes estimation easy when the map $\phi$ is invertible. In follow-up work, \citet{sattar2020non} and \citet{foster2020learning} generalized these results. In particular, \citet{foster2020learning} took inspiration from the study of generalized linear models and showed that a method developed for the standard i.i.d. setting can estimate dynamical systems of the form $\vb x_{t + 1} = \phi(A \vb x_t) + \vb w_t$ at an an optimal rate, where $\vb w_t$ is i.i.d. unobserved noise. All these works share a common characteristic, they study systems for which identification is possible through the use of non-adaptive inputs. We take the first step towards understanding systems that require adaptive methods for successful identification. 

In a different line of work, \citet{singh2019learning} proposed a learning framework for trajectory planning from learned dynamics. They propose a regularizer of dynamics that promotes stabilizability of the learned model, which allows tracking reference trajectories based on estimated dynamics. Also, \citet{khosravi2020convex} and \citet{khosravi2020nonlinear} developed learning methods that exploit other control-theoretic priors. Nonetheless, none of these works characterize the sample complexity of the problem. 

While most work that studies sample-complexity questions on tabular MDPs focuses on finding optimal policies, \citet{jin2020reward} and \citet{wolfer2018minimax} recently analyzed data collection for MDPs and system identification. More precisely, \citet{jin2020reward} developed an efficient algorithm for the exploration of tabular MDPs that enables near-optimal policy synthesis for an arbitrary number of reward functions, which are unknown during data collection, while \citet{wolfer2018minimax} derived minimax sample complexity guarantees for the estimation of ergodic Markov chains. Finally, we note that \citet{abbeel2005exploration} quantified the sample complexity of learning policies from demonstrations for tabular MDPs and for a simpler version of the model class \eqref{eq:system}. 

\section{Discussion and Open Problems}
\label{sec:takeaways}

System identification led to the development of controllers for many applications and promises to help us tackle many others in the future. In this work we proposed and analyzed a method that estimates a class of nonlinear dynamical systems in finite time by adaptively collecting data that is informative enough. While this results takes us closer to understanding the fundamental limits of data driven control, there are many limitations to our model and approach. We would like to end with a list of open questions:
\begin{itemize}
\item To solve trajectory planning problems we assumed access to a computational oracle. Is it possible to develop a method that has good statistical guarantees and is also computationally tractable? In practice, successful nonlinear control is often based on linearizations of the dynamics. Is it possible to quantify the sample complexity of system identification when trajectory planning is implemented using linearizations? 

\item Our method relies on full state observations. However, in many applications full state observations are impossible. 
Is it possible to gain statistical understanding of nonlinear system identification from partial observations? 

\item Our guarantee holds only when the true system being identified lies in the model class \eqref{eq:system}. When the true system is no part of the model class, how much data is needed to find the best model in class? \citet{ross2012agnostic} studied this problem under a generative model.

\item Only fully actuated systems can satisfy Assumption~\ref{as:control} with $\gamma < 1$. Is it possible to extend our result to systems that require multiple time steps to recover from disturbances?

\item Assumption~\ref{as:controllable} allows only systems whose feature vectors can align with any direction. What if the feature vectors can align only with vectors in a subspace? In this case, it is not possible to recover $\Astar$ fully. However, in this case, it would not be necessary to know $\Astar$ fully in order to predict or control. Is it possible to estimate $\Astar$ only in the relevant directions?

\item What if we consider infinite dimensional feature maps $\phi$? Can we develop a statistical theory of learning RKHS models of dynamical systems? 

\end{itemize}

\subsection*{Acknowledgments}
This research is generously supported in part by ONR awards N00014-17-1-2191, N00014-17-1-2401, and N00014-18-1-2833, NSF CPS award 1931853, and the DARPA Assured Autonomy program (FA8750-18-C-0101).
We would like to thank Alekh Agarwal, Francesco Borrelli, Debadeepta Dey, Munther Dahleh, Ali Jadbabaie, Sham Kakade, Akshay Krishnamurthy, John Langford, and Koushil Sreenath for useful comments and for pointing out relevant related work. We would also like to thank Michael Muehlebach for detailed and valuable feedback on our manuscript. 

\bibliographystyle{abbrvnat}   
\bibliography{nonlin}  

\appendix


\section{Proof of Proposition~\ref{prop:ellipsoid_update}}
\label{app:ellipsoid}

To make this section self-contained we restate Proposition~\ref{prop:ellipsoid_update} here.

\ellipsoid*

The next lemma relates scaling ellipsoids in one direction with the scaling of their volumes. A proof of this result can be found in the work by \citet{abbasi2011online}. 

\begin{lemma}
\label{lem:ellipsoid_volume}
Suppose $M$ and $N$ are two positive definite matrices with $M \succ N \succ 0$. Then, 
\begin{align*}
\sup_{v \neq 0} \frac{v^\top M v}{v^\top N v} \leq \frac{\det(M)}{\det(N)}. 
\end{align*}
\end{lemma}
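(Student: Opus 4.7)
The plan is to reduce the inequality to an elementary eigenvalue fact by a whitening (change of variables) argument. First, since $N \succ 0$, the matrix $N^{1/2}$ is well-defined and invertible, so the substitution $w = N^{1/2} v$ is a bijection on $\RR^k \setminus \{0\}$. Under this substitution one computes $v^\top M v = w^\top A w$ and $v^\top N v = w^\top w$, where $A := N^{-1/2} M N^{-1/2}$. Consequently
\begin{align*}
\sup_{v \neq 0} \frac{v^\top M v}{v^\top N v} = \sup_{w \neq 0} \frac{w^\top A w}{w^\top w} = \lambda_{\max}(A),
\end{align*}
while the determinant ratio rewrites by multiplicativity of $\det$ as $\det(M)/\det(N) = \det(N^{-1/2} M N^{-1/2}) = \det(A)$. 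So the lemma will follow once I show $\lambda_{\max}(A) \leq \det(A)$.

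Next, I would use the strict hypothesis $M \succ N$ to control the spectrum of $A$. Conjugating $M - N \succ 0$ on both sides by $N^{-1/2}$ yields $A - I \succ 0$, so every eigenvalue of $A$ is strictly greater than one. Ordering them as $\lambda_1 \geq \lambda_2 \geq \cdots \geq \lambda_k > 1$, the inequality
\begin{align*}
\lambda_{\max}(A) = \lambda_1 \leq \lambda_1 \cdot \prod_{i = 2}^k \lambda_i = \det(A)
\end{align*}
is immediate since each factor $\lambda_i$ with $i \geq 2$ is at least one. Combining this with the two identities from the first paragraph yields the desired bound.

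There is no substantive obstacle here; the argument is a routine change-of-basis reduction. The only care required is the well-definedness of the positive definite square root $N^{1/2}$ (guaranteed by $N \succ 0$) and the verification that the conjugated inequality preserves strict positive definiteness. I note in passing that the bound $\lambda_i \geq 1$ is all that is used in the final step, so the same proof would go through under the weaker hypothesis $M \succeq N \succ 0$.
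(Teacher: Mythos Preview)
Your argument is correct: the whitening substitution reduces the generalized Rayleigh quotient to $\lambda_{\max}(A)$ with $A = N^{-1/2} M N^{-1/2}$, the determinant ratio to $\det(A)$, and the strict ordering $M \succ N$ forces every eigenvalue of $A$ above $1$, so $\lambda_{\max}(A) \leq \prod_i \lambda_i(A) = \det(A)$. The paper does not supply its own proof of this lemma but defers to \citet{abbasi2011online}; your reduction is the standard one used in that line of work, so there is nothing substantively different to compare.
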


Now, we are ready to prove Proposition~\ref{prop:ellipsoid_update}. 
We denote by $N_t = N_0 + \sum_{i = 1}^t v_i v_i^\top$. First, we prove that $\det(N_{t + 1}^{-1}) \leq \det(N_t^{-1}) / (1 + \beta)$ whenever $v_{t + 1}^\top N_{t}^{-1} v_{t + 1} \geq \beta$.  

By definition we have $N_{t+1} \succeq N_t \succ 0$. Therefore, $N_{t + 1}^{-1} \preceq N_t^{-1}$. Now, we apply the Sherman-Morrison rank-one update formula to find 
\begin{align}
v_{t + 1}^\top N_{t + 1}^{-1} v_{t + 1} &= v_{t + 1}^\top N_{t}^{-1} v_{t + 1} - \frac{\left(v_{t + 1}^\top N_{t}^{-1} v_{t + 1} \right)^2}{1 + v_{t + 1}^\top N_{t}^{-1} v_{t + 1} } \\
&= \left(1 - \frac{v_{t + 1}^\top N_{t}^{-1} v_{t + 1}}{1 + v_{t + 1}^\top N_{t}^{-1} v_{t + 1} }\right)v_{t + 1}^\top N_{t}^{-1} v_{t + 1}.
\end{align}
Since the function $x \mapsto \frac{x}{1 + x}$ is increasing for $x > -1$, we find
\begin{align}
v_{t + 1}^\top N_{t + 1}^{-1} v_{t + 1} 
&\leq \frac{v_{t + 1}^\top N_{t}^{-1} v_{t + 1}}{1 + \beta}.
\end{align}
whenever $v_{t + 1}^\top N_{t}^{-1} v_{t + 1} \geq \beta$. Then, Lemma~\ref{lem:ellipsoid_volume} implies that $\det(N_{t + 1}^{-1}) \leq \det(N_t^{-1}) / (1 + \beta)$ whenever $v_{t + 1}^\top N_{t}^{-1} v_{t + 1} \geq \beta$, which in turn implies $\det(N_{t + 1}) \geq (1 + \beta) \det(N_t)$ whenever $v_{t + 1}^\top N_{t}^{-1} v_{t + 1} \geq \beta$.

Let us denote by $\lambda_1(t)$, $\lambda_2(t)$, \ldots, $\lambda_k(t)$ the eigenvalues of $N_t$ sorted in decreasing order. Recall that $\lambda_i(t)$ is a non-decreasing function of $t$. 
Now, let $\varepsilon_{i,t} = \log_{1 + \beta}(\lambda_{i}(t) / \lambda_{i}(t - 1))$. Therefore, we have $\lambda_i(t) = (1 + \beta)^{\varepsilon_{i,t}} \lambda_i(t - 1)$. We know $\varepsilon_{i,t} \geq 0$ for all $i$ and $t$ and we know that $\sum_{i = 1}^k \varepsilon_{i,t} \geq 1$ when $v_{t}^\top N_{t - 1}^{-1} v_{t} \geq \beta$ because $\det(N_{t + 1}) \geq (1 + \beta) \det(N_t)$. 

By definition, we have $\lambda_{i}(t) = (1 + \beta)^{\sum_{j = 1}^{t} \varepsilon_{i,j}} \lambda_{i}(N_0) \geq (1 + \beta)^{\sum_{j = 1}^t \varepsilon_{i,j}} \lambda_{k}(N_0)$. Since $\max_j \norm{v_j} \leq b$, we know that $\lambda_i(t + 1) \leq \lambda_i(t) + b^2$. Therefore, 
\begin{align}
(1 + \beta)^{\varepsilon_{i,t + 1}} = \frac{\lambda_{i}(t + 1)}{\lambda_i(t)} \leq 1 + \frac{b^2}{\lambda_{i}(t)} \leq 1 + \frac{b^2}{(1 + \beta)^{\sum_{j = 1}^t \varepsilon_{i,j}} \lambda_{k}(N_0)}.
\end{align}
In other words, we have 
\begin{align}
\varepsilon_{i,t + 1} \leq \frac{\log \left(1 + \frac{b^2}{(1 + \beta)^{\sum_{j = 1}^t \varepsilon_{i,j}} \lambda_{k}(N_0)} \right)}{\log(1 + \beta)} \leq \frac{b^2}{(1 + \beta)^{\sum_{j = 1}^t \varepsilon_{i,j}} \lambda_{k}(N_0) \log(1 + \beta)}.  
\end{align}
Therefore, when $\sum_{j = 1}^t \varepsilon_{i,j} > \log\left(\frac{2k b^2}{\lambda_k(N_0) \log(1 + \beta)} \right) / \log(1 + \beta)$, we have $\varepsilon_{i, t + 1} \leq 1/(2k)$. We denote $\rho = \log\left(\frac{2k b^2}{\lambda_k(N_0) \log(1 + \beta)} \right) / \log(1 + \beta)$. 

Suppose there are $n$ vectors $v_j$ such that $v_{j}^\top N_{j - 1}^{-1} v_{j} \geq \beta$ with $j \leq t$. 
Since $\sum_{i = 1}^k \varepsilon_{i,j} \geq 1$ whenever $v_{j}^\top N_{j - 1}^{-1} v_{j} \geq \beta$, we have $\sum_{j = 1}^t \sum_{i = 1}^k \varepsilon_{i,j} \geq n$. Moreover, at each time $j$ with $v_{j}^\top N_{j - 1}^{-1} v_{j} \geq \beta$ we know that 
\begin{align}
\varepsilon_{i,j} &\geq 1 - \sum_{i^\prime \neq i} \varepsilon_{i^\prime, j} \geq 1 - \sum_{i^\prime : \sum_{s = 1}^{j - 1} \varepsilon_{i^\prime, s} < \rho} \varepsilon_{i^\prime, j} - \sum_{i^\prime : \sum_{s = 1}^{j - 1} \varepsilon_{i^\prime, s} \geq \rho} \varepsilon_{i^\prime, j}\\
&\geq 1 - \sum_{i^\prime : \sum_{s = 1}^{j - 1} \varepsilon_{i^\prime, s} < \rho} \varepsilon_{i^\prime, j} - \sum_{i^\prime : \sum_{s = 1}^{j - 1} \varepsilon_{i^\prime, s} \geq \rho} \frac{1}{2k} \geq \frac{1}{2} - \sum_{i^\prime : \sum_{s = 1}^{j - 1} \varepsilon_{i^\prime, s} < \rho} \varepsilon_{i^\prime, j}. 
\end{align}

Summing these inequalities over $j$, for any $i$ we have 
\begin{align}
\sum_{j = 1}^t \varepsilon_{i,j} &\geq \frac{n}{2} - \sum_{i^\prime \neq i} \min \left\{\log\left(\frac{2k b^2}{\lambda_k(N_0) \log(1 + \beta)}\right) / \log(1 + \beta), \sum_{j = 1}^t \varepsilon_{i^\prime, j} \right\} \\
&\geq \frac{n}{2} - (k - 1) \log\left(\frac{2k b^2}{\lambda_k(N_0) \log(1 + \beta)}\right) / \log(1 + \beta). 
\end{align}
Then, once $n \geq 2k \log\left(\frac{2k b^2}{\lambda_k(N_0) \log(1 + \beta)}\right) / \log(1 + \beta)$, we obtain 
\begin{align*}
\sum_{j = 1}^t \varepsilon_{i,j} \geq \log\left(\frac{2k b^2}{\lambda_k(N_0) \log(1 + \beta)}\right) / \log (1 + \beta),
\end{align*} 
which implies $\varepsilon_{i,j} < \frac{1}{2k}$ for all $j > t$. Since $i$ was chosen arbitrary, we see that whenever $n \geq 2k \log\left(\frac{2k b^2}{\lambda_k(N_0) \log(1 + \beta)}\right)$ and $j > t$ we get $\sum_{i = 1}^k \varepsilon_{ij} < k\frac{1}{2k} < 1$. Hence, $n$ must be smaller or equal than $\left\lceil 2k \log\left(\frac{2k b^2}{\lambda_k(N_0) \log(1 + \beta)}\right) \right\rceil$.

\section{Proof of Proposition~\ref{prop:rank-one}}
\label{app:rank_one}

To make this section self-contained we restate Proposition~\ref{prop:rank-one} here.

\rankone*

\noindent \textbf{Remark:} If we choose $v_i$ to cycle over standard basis vectors, we see that $\lambda_{\min} \left(\sum_{i = 1}^T v_i v_i^\top \right) = \frac{\alpha^2 T}{k}$ when $T$ is a multiple of $k$. Therefore, we cannot hope to have a lower bound in Proposition~\ref{prop:rank-one} better than $\frac{\alpha^2 T}{k}$.
Our proof can be refined to show 
\begin{align*}  
\lambda_{\min} \left(\sum_{i = 1}^T v_i v_i^\top \right) \geq \frac{\alpha^2 T}{k} - \Ocal(\sqrt{T}),
\end{align*}
where the $\Ocal(\cdot)$ notation hides dependencies on $\alpha$, $b$, and $k$.

To prove Proposition~\ref{prop:rank-one} we need the following lemma, which intuitively shows that the sum of the smallest eigenvalues cannot lag behind the larger eigenvalues by too much. 

\begin{lemma}
\label{lem:rank-one}
Let $M \in \RR^{k \times k}$ be a positive semi-definite matrix. Let $\lambda_1 \geq \lambda_2 \geq \ldots \geq \lambda_k$ be the eigenvalues of $M$ and let $u_1$, $u_2$, \ldots, $u_k$ be the corresponding eigenvectors of unit norm. Suppose $v$ is a vector in $\RR^k$ such that $\norm{v} \leq b$ and $|\langle v, u_k\rangle| \geq \alpha$. Let $\nu_1 \geq \nu_2 \geq \ldots \geq \nu_k$ be the eigenvalues of $M + vv^\top$. Then, for any $s \in \{2,\ldots k\}$ such that $\lambda_{s - 1} \geq \lambda_s + b^2 - \alpha^2 / 2$ we have 
\begin{align}
\sum_{i = s}^k \nu_i \geq \sum_{i = s}^k \lambda_i + \frac{\alpha^2}{2}. 
\end{align}
\end{lemma}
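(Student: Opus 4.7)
The plan is to convert the desired lower bound on $\sum_{i=s}^k\nu_i$ into an upper bound on its complement. Using $\sum_i\nu_i = \tr M + \norm{v}^2$, the lemma is equivalent to $\sum_{i=1}^{s-1}\nu_i \leq \sum_{i=1}^{s-1}\lambda_i + \norm{v}^2 - \alpha^2/2$. By the Ky Fan variational identity $\sum_{i=1}^{s-1}\nu_i = \max_W \tr(P_W(M+vv^\top))$ over $(s{-}1)$-dimensional orthogonal projections $P_W$, it suffices to show for every such $W$:
\begin{align*}
\tr(P_W M) + v^\top P_W v \leq \sum_{i=1}^{s-1}\lambda_i + \norm{v}^2 - \alpha^2/2.
\end{align*}

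First I would introduce a scalar ``misalignment'' $r := \tr(Q P_W Q) = \sum_{i=s}^k \norm{P_W u_i}^2$, where $Q$ is the orthogonal projection onto $\Span(u_s,\ldots,u_k)$, and decompose $v = v_\parallel + v_\perp$ accordingly with $a := \norm{v_\parallel}$, $c := \norm{v_\perp} \geq \alpha$. A standard rearrangement using the ordering of eigenvalues gives $\tr(P_W M) \leq \sum_{i=1}^{s-1}\lambda_i - r(\lambda_{s-1}-\lambda_s) \leq \sum_{i=1}^{s-1}\lambda_i - r(b^2-\alpha^2/2)$ by the gap hypothesis. For the quadratic form, since $QP_WQ$ is PSD with trace $r$, every unit $w \in W$ satisfies $\norm{Qw}^2 \leq \lambda_{\max}(QP_WQ) \leq r$; combining this with $|\langle v, w\rangle| \leq a\norm{w_\parallel} + c\norm{w_\perp}$ and optimizing over $\norm{w_\perp}^2 \leq r$ yields
\begin{align*}
v^\top P_W v \;=\; \max_{w \in W,\,\norm{w}=1}\langle v, w\rangle^2 \;\leq\; \bigl(a\sqrt{1-r} + c\sqrt r\bigr)^2
\end{align*}
in the ``small-$r$'' regime $r \leq c^2/\norm{v}^2$, with the trivial bound $v^\top P_W v \leq \norm{v}^2$ otherwise.

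The main obstacle is verifying the resulting one-variable inequality. In the large-$r$ regime, the gap term alone gives $r(b^2-\alpha^2/2) \geq (\alpha^2/b^2)(b^2-\alpha^2/2) \geq \alpha^2/2$, using $b \geq \alpha$ (which follows from $\alpha \leq |\langle v, u_k\rangle| \leq \norm{v} \leq b$). In the small-$r$ regime, the Pythagorean identity $\norm{v}^2 - (a\sqrt{1-r}+c\sqrt r)^2 = (c\sqrt{1-r}-a\sqrt r)^2$ reduces the claim to
\begin{align*}
h(r) := (c\sqrt{1-r}-a\sqrt r)^2 + r(b^2-\alpha^2/2) \;\geq\; \alpha^2/2,
\end{align*}
where the two terms trade off against each other as $r$ varies. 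My plan to handle this is to substitute $r = \sin^2\phi$ and rewrite $h$ in the linearized form $\tfrac12(a^2+c^2+\Delta) + A\cos 2\phi + B\sin 2\phi$ with $\Delta = b^2-\alpha^2/2$, $A = (c^2-a^2-\Delta)/2$, $B = -ac$. The closed-form minimum in $\phi$ is $\tfrac12\bigl(\norm{v}^2 + \Delta - \sqrt{(c^2-a^2-\Delta)^2 + 4a^2c^2}\bigr)$; squaring the desired bound $h_{\min} \geq \alpha^2/2$ and invoking the identity $(c^2-a^2)^2 + 4a^2c^2 = (a^2+c^2)^2$ reduces it algebraically to $2\Delta(2c^2-\alpha^2) \geq \alpha^2(2\norm{v}^2-\alpha^2)$, which follows from $\Delta \geq b^2-\alpha^2/2$, $\norm{v} \leq b$, and $c \geq \alpha$. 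If the unconstrained minimizing $\phi$ falls outside the feasible range, then $h$ is monotone on $[0,c^2/\norm{v}^2]$ and its minimum is attained at the right endpoint, where $h = c^2\Delta/\norm{v}^2 \geq \alpha^2/2$ by the same calculation as in the large-$r$ case.
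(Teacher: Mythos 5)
Your proof is correct, but it takes a genuinely different route from the paper's. The paper works directly with the spectrum of the rank-one update via the Bunch--Nielsen--Sorensen secular equation $1 + \sum_i z_i^2/(\lambda_i - \nu) = 0$ (with $z_i = \langle v, u_i\rangle$): under the gap hypothesis, the contribution of the top block to the secular function is at most $1$ on the interval containing the bottom eigenvalues, so each bottom $\nu_j$ dominates the corresponding eigenvalue of the half-weight update $\diag(\lambda_s,\ldots,\lambda_k) + \tfrac12 zz^\top$, and the trace of that matrix delivers the $\alpha^2/2$ gain; this needs interlacing as an external input and a nondegeneracy caveat ($z_i \neq 0$, dismissed by noting $z_i = 0$ forces an eigenvalue to stay put). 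You instead dualize through the trace identity and the Ky Fan maximum principle, reducing the claim to an upper bound on $\tr(P_W(M + vv^\top))$ uniform over $(s{-}1)$-dimensional subspaces $W$, with the single leakage parameter $r = \tr(QP_WQ)$ mediating the trade-off: the gap hypothesis makes the Rayleigh-trace term pay $r\Delta$ for leakage, while leakage is the only way $v^\top P_W v$ can recover the bottom mass $c^2 \geq \alpha^2$ of $\|v\|^2$, and a one-variable sinusoidal minimization (valid as a lower bound even ignoring the constraint $r \leq c^2/\|v\|^2$, which makes your endpoint discussion redundant rather than necessary) closes the argument. I verified the key steps: the rearrangement $\tr(P_W M) \leq \sum_{i=1}^{s-1}\lambda_i - r(\lambda_{s-1}-\lambda_s)$ holds with weights $p_i = \|P_W u_i\|^2$; the bound $\|Qw\|^2 \leq \lambda_{\max}(QP_WQ) \leq r$ for unit $w \in W$ follows since $P_WQP_W$ and $QP_WQ$ share nonzero spectrum; the squaring step is legitimate because $\|v\|^2 + \Delta - \alpha^2 \geq 0$ (as $\alpha \leq \|v\| \leq b$); and the final inequality $2\Delta(2c^2 - \alpha^2) \geq \alpha^2(2\|v\|^2 - \alpha^2)$ does follow from $c \geq \alpha$, $\|v\| \leq b$, $\Delta = b^2 - \alpha^2/2$. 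What each approach buys: the paper's proof is shorter and localizes the eigenvalues via interlacing; yours is self-contained and more elementary (no secular-equation machinery, no $z_i \neq 0$ caveat), and it makes explicit the geometric mechanism--how the spectral gap penalizes any test subspace that rotates toward the bottom eigenspace--that the secular-equation argument leaves implicit.
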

\begin{proof}
First, we express $v$ in $M$'s eigenbasis: $v = \sum_{i = 1}^k z_i u_i$. Then, by assumption we know that $\norm{v}^2 = \sum_{i = 1}^k z_i^2 \leq b^2$ and $z_k^2 \geq \alpha^2$. Using a result by Bunch, Nielsen, and Sorensen \cite{bunch978rank} we know that $\nu_1 \geq \lambda_1$ and $\nu_i \in [\lambda_i, \lambda_{i - 1}]$ for every $i \in \{2,\ldots k\}$ and that the $k$ eigenvalues $\nu_i$ are the $k$ solutions of the secular equation:
\begin{align}
f(\nu) := 1 + \sum_{i = 1}^k \frac{z_i^2}{\lambda_i - \nu} = 0
\end{align}
if $z_i \neq 0$ for all $i$. If $z_i = 0$, there is an eigenvalue $\nu_j$ such that $\nu_j = \lambda_i$. We assume $z_i \neq 0$ for all $i$. 

If $\nu_s \geq \lambda_s + \frac{\alpha^2}{2}$, there is nothing to prove. Let us assume $\nu_s <  \lambda_s + \frac{\alpha^2}{2}$. Hence, the eigenvalues $\nu_s$, $\nu_{s + 1}$, \ldots, $\nu_k$ lie in the interval $[\lambda_k, \lambda_s + \alpha^2 / 2)$. For any $\nu \in [\lambda_k, \lambda_s + \alpha^2 / 2)$ we have 

\begin{align}
0 \leq \zeta(\nu) &:= \sum_{i = 1}^{s - 1} \frac{z_i^2}{\lambda_i - \nu} \leq \frac{\sum_{i  = 1}^{s - 1} z_i^2}{\lambda_{s  - 1} - \nu}\\
&\leq \frac{b^2 - \alpha^2}{\lambda_{s  - 1} - \nu} \leq \frac{b^2 - \alpha^2}{b^2 - \alpha^2} = 1. 
\end{align}

By rewriting the equation $f(\nu) = 0$, for any solution $\nu_\star$ which lies in $[\lambda_k, \lambda_s + \alpha^2 / 2)$ we obtain 
\begin{align*}
0 = 1 + \sum_{i = s}^k \frac{z_i^2}{(\lambda_i - \nu_\star)(1 + \zeta(\nu_\star))} \leq 1 + \sum_{i = s}^k \frac{z_i^2}{2(\lambda_i - \nu_\star)} 
\end{align*}
because $1 < 1 + \zeta(\nu_\star) \leq 2$ and $\sum_{i = s}^k \frac{z_i^2}{(\lambda_i - \nu_\star)} < 0$. 
Now, let $\nu_j$ be the unique solution $f(\nu_j) = 0$ in the interval $[\lambda_j, \lambda_{j - 1}]$ for $j \in \{s + 1, \ldots, k\}$ or in the interval $[\lambda_s, \lambda_{s} + \alpha^2 / 2)$ for $j = s$. 

Since the function $g(\nu) = 1 + \sum_{i = s}^k \frac{z_i^2}{2(\lambda_i - \nu)}$ is increasing on the interval $[\lambda_j, \lambda_{j - 1}]$ (if $j = s$, the interval is $[\lambda_s, \infty)$) we know that the unique solution $\nu_j^\prime \in [\lambda_j, \lambda_{j - 1}]$ of the equation $g(\nu) = 0$ satisfies $\nu_j^\prime \leq \nu_j$ for all $j \in \{s, \ldots, k\}$. 

Therefore, we have shown that $\sum_{j = s}^k \nu_j \geq \sum_{j = s}^k \nu_j^{\prime}$, where $\nu_j^\prime$ are the solutions to the equation 
\begin{align*}
1 + \sum_{i = s}^k \frac{z_i^2}{2(\lambda_i - \nu)} = 0.
\end{align*}
However, the solutions of this equation are the eigenvalues of the matrix $Q = \diag(\lambda_s, \lambda_{s + 2}, \ldots, \lambda_k) + \frac{1}{2} zz^\top$, where $z = [z_s, z_{s + 1}, \ldots, z_k]^\top$. Hence, 
\begin{align*}
\sum_{j = s}^k \nu_j &\geq \sum_{j = s}^k \nu_j^\prime = \tr(Q) = \sum_{j = s}^k \lambda_j + \frac{1}{2} \sum_{j = s}^k z_j^2 \geq \sum_{j = s}^k \lambda_j + \frac{\alpha^2}{2}. 
\end{align*}
\end{proof}

Now we can turn back to the proof of Proposition~\ref{prop:rank-one}. Let $\lambda_i(t)$ be the $i$-th largest eigenvalue of $\sum_{j = 1}^t v_j v_j^\top$ and let $K(t)$ be the number of times \begin{align*}
v_{j + 1} \in \left\{v | v\in \Vcal \text{ and } |\langle \tilde{v}_{j + 1}, v \rangle | \geq \alpha \right\}
\end{align*}
with $\tilde{v}_{j + 1} \in \arg \min_{\norm{v} = 1} v^\top \left(\sum_{i = 1}^j v_i v_i^\top\right)v$ and $j < t$.

Suppose we know that $\sum_{i = j - 1}^{k} \lambda_i(t) \geq c_{j - 1} \alpha^2 K(t) - d_{j - 1}$ for all $t \geq 1$, where $c_{j - 1} > 0$ and $d_{j - 1} \geq 0$ are some real values. Since $\norm{v_j} \geq \alpha$ for all $j$, we can choose $c_1 = 1$ and $d_1 = 0$. Now, we lower bound $\sum_{i = j}^k \lambda_i(t)$ as a function of $t$. To this end, we define 
$t_j$ to be the maximum time in $\{1,2,\ldots, t\}$ such that $\lambda_{s - 1}(t_2) - \lambda_s(t_2) < b^2 - \frac{\alpha^2}{2}$ for all $s \in \{j, j + 1,\ldots, k\}$. 

Then, Lemma~\ref{lem:rank-one} and our induction hypothesis guarantee that 
\begin{align*}
\sum_{i = j}^k \lambda_i(t) \geq \sum_{i = j}^k \lambda_i(t_j) + \frac{\alpha^2 (K(t)- K(t_j))}{2} \geq& \frac{\alpha^2(K(t) - K(t_j))}{2} + c_{j - 1} \alpha^2 K(t_j) - d_{j - 1} - \lambda_{j - 1}(t_j). 
\end{align*}
By the definition of $t_j$ we know that $\lambda_i(t_j) \geq \lambda_{j - 1}(t_j) - (i - j + 1)(b^2 - \alpha^2)$ for all $i \geq j$. Therefore, we have the lower bound:
\begin{align*}
\sum_{i = j}^k \lambda_i(t) \geq \sum_{i = j}^k \lambda_i(t_j) \geq (k - j + 1)\lambda_{j - 1}(t_j) - \frac{(k - j + 1)(k - j + 2)}{2} \left(b^2 - \frac{\alpha^2}{2}\right). 
\end{align*}
We minimize the maximum of the previous two lower bounds with respect to $\lambda_{j - 1}(t_j)$, which can be done by finding the value of $\lambda_{j - 1}(t_j)$ which makes the two lower bounds equal. Then, we find
\begin{align*}
\sum_{i = j}^k \lambda_i(t) \geq \frac{\alpha^2}{2} \frac{k - j + 1}{k - j + 2} \left((2 c_{j - 1} - 1) K(t_j) + K(t) \right) - \frac{k - j + 1}{k - j + 2} d_{j - 1} - \frac{k - j + 1}{2} \left(b^2 - \frac{\alpha^2}{2}\right). 
\end{align*}

\textbf{Case 1:} $2c_{j - 1} \geq 1$. Then, since $K(t_j) \geq 0$, we obtain 
\begin{align*}
\sum_{i = j}^k \lambda_i(t) \geq \frac{\alpha^2}{2} \frac{k - j + 1}{k - j + 2} K(t) - \frac{k - j + 1}{k - j + 2} d_{j - 1} - \frac{k - j + 1}{2} \left(b^2 - \frac{\alpha^2}{2}\right). 
\end{align*}

\textbf{Case 2:} $2c_{j - 1} < 1$. Then, since $K(t_j) \leq K(t)$, we obtain 
\begin{align*}
\sum_{i = j}^k \lambda_i(t) \geq \alpha^2 \frac{k - j + 1}{k - j + 2} c_{j - 1} K(t) - \frac{k - j + 1}{k - j + 2} d_{j - 1} - \frac{k - j + 1}{2} \left(b^2 - \frac{\alpha^2}{2}\right). 
\end{align*}

We see that $c_2 = \frac{1}{2} \frac{k - 1}{k} < \frac{1}{2}$ and $\frac{k - j + 1}{k - j + 2} c_{j - 1} < c_{j - 1}$. Therefore, the following recursions hold
\begin{align*}
c_{j} &= \frac{k - j + 1}{k - j + 2}c_{j - 1}\\
d_j &= \frac{k - j + 1}{k - j + 2} d_{j - 1} + \frac{k - j + 1}{2}\left(b^2 - \frac{\alpha^2}{2}\right), 
\end{align*}
with $c_2 = \frac{k - 1}{2k}$ and $d_2 = \frac{k - 1}{2}\left(b^2 - \frac{\alpha^2}{2}\right)$. By unrolling the recursions, we obtain the conclusion.

\section{Refinement of Assumption~\ref{as:initial}}
\label{app:refine}

We saw that when Assumption~\ref{as:initial} offers a lower bound 
\begin{align}
\lambda_{\min} \left(\sum_{t = 0}^{t_0 - 1} \phi(\vb x_t, \vb u_t) \phi(\vb x_t, \vb u_t)^\top \right) \geq 1 + \ctinitial b_w^2 L^2 \left(\sum_{i = 0}^{H - 1} \gamma^i\right) \left(d + k \log(b_\phi^2 T) + \log\left(\frac{\pi^2 T^2 }{6\delta}\right)\right)
\end{align}
Algorithm~\ref{alg:traj_opt_exploration} with parameter $\beta = \ctbeta \left(d + k \log(\beta_\phi^2 T) + \log(\pi^2 T^2 / (6\delta))\right)^{-1}$ is guaranteed to collect measurements $(\phi_t, \vb x_{t + 1})$ such that 
\begin{align}
\label{eq:part3_guarantee}
\lambda_{\min} \left(\sum_{t = 0}^{T + t_0} \phi_t \phi_t^\top \right) \geq \frac{\alpha^2}{32} \left(\frac{T}{H} - N_e(T)\right) - \frac{k - 1}{2}b_\phi^2. 
\end{align}

We wrote $N_e(T)$ because $N_e$ is a function of $\beta$ and $\beta$ is a function of $T$. 
Then, let us consider $T_\star$ to be the smallest value such that 
\begin{align}
\label{eq:tstar}
\frac{\alpha^2}{32} \left(\frac{T_\star}{H} - N_e(T_\star)\right) - \frac{k - 1}{2}b_\phi^2 \geq 1 + \ctinitial b_w^2 L^2 \left(\sum_{i = 0}^{H - 1} \gamma^i\right) \left(d + k \log(2b_\phi^2 T_\star) + \log\left(\frac{\pi^2 T_\star^2 }{3\delta}\right)\right).
\end{align}
Such $T_\star$ exists because $N_e(T_\star) = \Ocal(\log(T_\star))$. Moreover, if \eqref{eq:tstar} holds, any $T \geq T_\star$ satisfies \eqref{eq:tstar}. 

Now, suppose Assumption~\ref{as:initial} is satisfied with $T$ replaced by $T_\star$. Then, we can set $\beta = \ctbeta \left(d + k \log(\beta_\phi^2 T_\star) + \log(\pi^2 T_\star^2 / (6\delta))\right)^{-1}$ for all iterations of Algorithm~\ref{alg:traj_opt_exploration} while $t \leq T_\star$. When $t > T_\star$ inequalities \eqref{eq:part3_guarantee} and \eqref{eq:tstar} guarantee that we can update $\beta$ to be equal to 
\begin{align*}
\ctbeta \left(d + k \log(2\beta_\phi^2 T_\star) + \log(2\pi^2 T_\star^2 / (3\delta))\right)^{-1}
\end{align*} 
and that the minimal eigenvalue of $\sum_{t = 0}^{T} \phi_t \phi_t^\top$ is sufficiently large as long as $T\leq 2 T_\star$. 
Therefore, we can update $\beta$ in epochs of doubling lengths and have Theorem~\ref{thm:main} hold as long as Assumption~\ref{as:initial} holds with $T$ replaced by $T_\star$. 

\newpage

\section{Detailed pseudo-code of Algorithm~\ref{alg:traj_opt_exploration}}
\label{app:alg}

\begin{center}
  \begin{algorithm}[h!]
    \caption{Active learning for nonlinear system identification}{}
    \begin{algorithmic}[1]
      \REQUIRE{Parameters: the feature map $\phi$, initial trajectory $\Dcal$, and parameters $T$, $\alpha$, and $\beta$.}
      \STATE Initialize $\Phi$ to have rows $\phi(\vb x_j, \vb u_j)^\top$ and $Y$ to have rows $(\vb x_{j + 1})^\top$, for $(\vb x_j, \vb u_j, \vb x_{j + 1}) \in \Dcal$.  
      \STATE Set $\widehat{A}  \gets Y^\top \Phi (\Phi^\top \Phi)^{-1}$, i.e. the OLS estimate according to $\Dcal$.  
      \STATE Set $t \gets t_0$.
      \WHILE{$t \leq T + t_0$}
      \STATE Set $\vb x_0^R \gets \vb x_t$,
      \STATE Set $v$ to be a minimal eigenvector of $\Phi^\top \Phi$, with $\norm{v} = 1$. 
      \STATE \textbf{Trajectory planning:} find inputs $\vb u_0^R$, $\vb u_1^R$, \ldots, $\vb u_r^R$, with $\norm{\vb u_j^R} \leq b_u$ and $r \leq H$, such that 
      \begin{align*}
&\left | \langle \phi(\vb x^R_r, \vb u_r^R) , v \rangle \right | \geq \frac{\alpha}{2} \text{ or }  \phi(\vb{x}^R_r, \vb u_r^R)^\top (\Phi^\top \Phi)^{-1} \phi(\vb{x}^R_r, \vb u_r^R) \geq \beta,
     \end{align*}
     where $\vb x_{j + 1}^R = \Ahat \phi(\vb x_j^R, \vb u_j^R)$ for all $j \in \{0, 1, \ldots, r - 1\}$.  
     \label{line:planning}

     \STATE \textbf{Trajectory tracking:}
      \FOR{$j = 0, \ldots, r$}
        \IF{$\max_{\vb u \in \BB_{r_u}} \phi(\vb x_t, \vb u)^\top (\Phi^\top \Phi)^{-1} \phi(\vb x_t, \vb u) \geq \beta$}
          \STATE Set $\vb u_t \in \arg\max_{\vb u \in \BB_{r_u}} \phi(\vb x_t, \vb u)^\top (\Phi^\top \Phi)^{-1} \phi(\vb x_t, \vb u)$,
          \label{line:greedy}
          \STATE Input $\vb u_t$ into the real system and observe the next state: $\vb x_{t + 1} = A_\star \phi(\vb x_t, \vb u_t) + \vb w_t$,
          \STATE $t \leftarrow t + 1$,
          \STATE \textbf{break.}
        \ELSIF{$j \leq r - 1$}
          \STATE Set $\vb u_t \in \arg \min_{\vb u \in \BB_{r_u}} \norm{\Ahat (\phi(\vb x_t,\vb  u) - \phi(\vb x_j^R, \vb u_j^R))}$.
          \label{line:control}
        \ELSE
          \STATE $\vb u_t = \vb u_r^R$. 
        \ENDIF
      \STATE Input $\vb u_t$ into the real system and observe the next state: $\vb x_{t + 1} = A_\star \phi(\vb x_t, \vb u_t) + \vb w_t$,
      \STATE $t \leftarrow t + 1$.
      \ENDFOR
      \STATE  Set $\Phi^\top \gets [\phi_0, \phi_1, \ldots, \phi_{t -1}]$ and $Y^\top \gets [\vb x_1, \vb x_2, \ldots, \vb x_{t}]$, where $(\phi_j, \vb x_{j + 1})$ are all feature-state transitions observed so far. 
      \STATE \textbf{Re-estimate:} $\widehat{A} \gets Y^\top \Phi (\Phi^\top \Phi)^{-1}$.
      \ENDWHILE
    \STATE Output the last estimates $\widehat{A}$.
    \end{algorithmic}
    \label{alg:traj_opt_exploration_detailed}
    \end{algorithm}
\end{center}

\end{document}